\documentclass[letterpaper, 10 pt, conference]{ieeeconf}  

\IEEEoverridecommandlockouts                              

\overrideIEEEmargins                                      

\usepackage{graphicx} 

\usepackage[hidelinks,bookmarks=true]{hyperref}

\usepackage{amsmath,amssymb,amsfonts}
\usepackage{bm}
\usepackage{xspace}
\usepackage{color}
\usepackage{xcolor}
\usepackage{graphicx,subcaption}
\usepackage{hyperref}
\usepackage{algorithm}
\usepackage{algorithmicx}
\usepackage[noend]{algpseudocode}
\algrenewcommand\algorithmicindent{0.75em}%
\usepackage{multirow}
\usepackage[table,xcdraw]{xcolor}
\usepackage{graphicx}
\usepackage{float}
\usepackage{caption}
\usepackage{subcaption}
\usepackage{comment}

\newboolean{PNGFig}
\setboolean{PNGFig}{false}

\ifthenelse{\boolean{PNGFig}}{
    \newcommand{\figType}[2]{#1}
}{
    \newcommand{\figType}[2]{#2}
}


\newcommand{\calO}{\ensuremath{\mathcal{O}}\xspace}

\newcommand{\calR}{\ensuremath{\mathcal{R}}\xspace}

\newcommand{\calW}{\ensuremath{\mathcal{W}}\xspace}
\newcommand{\calX}{\ensuremath{\mathcal{X}}\xspace}

\newcommand{\R}{\mathbb{R}}

\def\eps{\varepsilon}

\newcommand{\Cpp}{C\raise.08ex\hbox{\tt ++}\xspace}



\newtheorem{theorem}{Theorem}
\newtheorem{theorem*}{Theorem}

\newtheorem{defin}{Definition}
\newtheorem{prob}{Problem}

\newcommand\algname[1]{\textsf{#1}\xspace}
\newcommand\astar{\algname{A$^*$}}

\newcommand{\apex}{\algname{A$^*$pex}}

\newcommand{\ppastar}{\algname{PP-A$^*$}}


\newcommand{\cost}{\ensuremath{\mathbf{c}}\xspace}

\newcommand{\ignore}[1]{}




\newboolean{HIDENOTES}
\setboolean{HIDENOTES}{false}

\ifthenelse{\boolean{HIDENOTES}}{
    \newcommand{\OS}[1]{{}}
    \newcommand{\HP}[1]{{}}
    \newcommand{\EW}[1]{{}}
}{
    \newcommand{\OS}[1]{\textcolor{orange}{\textbf{OS:} #1}}
    \newcommand{\HP}[1]{{\textcolor{blue}{\textbf{HP:} #1}}}    
    \newcommand{\EW}[1]{{\textcolor{red}{\textbf{EW:} #1}}}
}

\newcommand\shape{\texttt{Shape}\xspace}

\usepackage{hyperref}


\usepackage[toc,page]{appendix}
\usepackage{wrapfig}

\usepackage[noend]{algpseudocode}
\usepackage{algorithm}
\algrenewcommand\algorithmicindent{0.7em}%

\usepackage{booktabs}
\usepackage[table]{xcolor}
\usepackage{adjustbox}

\begin{document}

\title{Generalizing Multi-Objective Search via \\
Objective-Aggregation Functions
}

\author{
Hadar Peer$^{1}$, 
Eyal Weiss$^{1}$, 
Ron Alterovitz$^{2}$, 
Oren Salzman$^{1}$%
\thanks{$^{1}$Technion, Israel Institute of Technology, Haifa, Israel.
}%
\thanks{$^{2}$University of North Carolina at Chapel Hill, NC, USA.
}%
}




\maketitle

\begin{abstract}
Multi-objective search (MOS) has become essential in robotics, as real-world robotic systems need to simultaneously balance multiple, often conflicting objectives. 
Recent works explore complex interactions between objectives, leading to problem formulations that do not allow the usage of out-of-the-box state-of-the-art MOS algorithms.
In this paper, we suggest a generalized problem formulation that optimizes solution objectives via aggregation functions of hidden (search) objectives. 
We show that our  formulation supports the application of standard MOS algorithms, necessitating only to properly extend several core operations to reflect the specific aggregation functions employed.
We demonstrate our approach in several diverse robotics planning problems, spanning motion-planning for navigation, manipulation and planning fr medical systems under obstacle uncertainty as well as inspection planning, and route planning with different road types.
We solve the problems using state-of-the-art MOS algorithms after properly extending their core operations, and provide empirical evidence that they outperform by orders of magnitude the vanilla versions of the algorithms applied to the same problems but without objective aggregation.
\end{abstract}

\section{Introduction}
\label{sec:intro}
Multi-objective search (MOS)~\cite{SalzmanF0ZCK23,tarapata2007selected,ulungu1991multi} has emerged as a fundamental tool in robotics, where systems must simultaneously satisfy diverse and often conflicting objectives. Examples include balancing path efficiency with safety, trading off energy consumption against task completion time, or optimizing accuracy while ensuring robustness to uncertainty. In such settings, the ability to reason over multiple objectives during search is indispensable.

Despite progress in MOS algorithms~\cite{hernandez2023simple,ren2025emoa}, existing formulations largely assume that objectives can be directly optimized in their raw form. However, recent research has demonstrated that this assumption is restrictive: In many robotics domains, objectives interact in ways that are not efficiently captured by independent optimization~\cite{FuKSA23,SlutskyYWF21}. Such interactions yield problem formulations that fall outside the direct applicability of standard MOS algorithms, limiting their effectiveness in realistic robotic planning scenarios.

\ignore{
\begin{figure*}[!t]
  \centering
  \subfloat[]{\includegraphics[height=0.131\textheight]{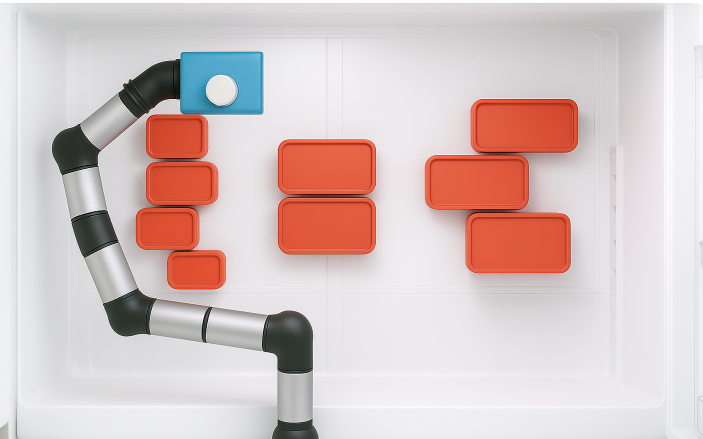}}\hfill
  \subfloat[]{\includegraphics[height=0.131\textheight]{img/introduction/shadow.pdf}} 
  \subfloat[]{\includegraphics[height=0.131\textheight]{img/introduction/pareto_of_intrmidaite_node__4.pdf}}\hfill
  \subfloat[]{\includegraphics[height=0.131\textheight]{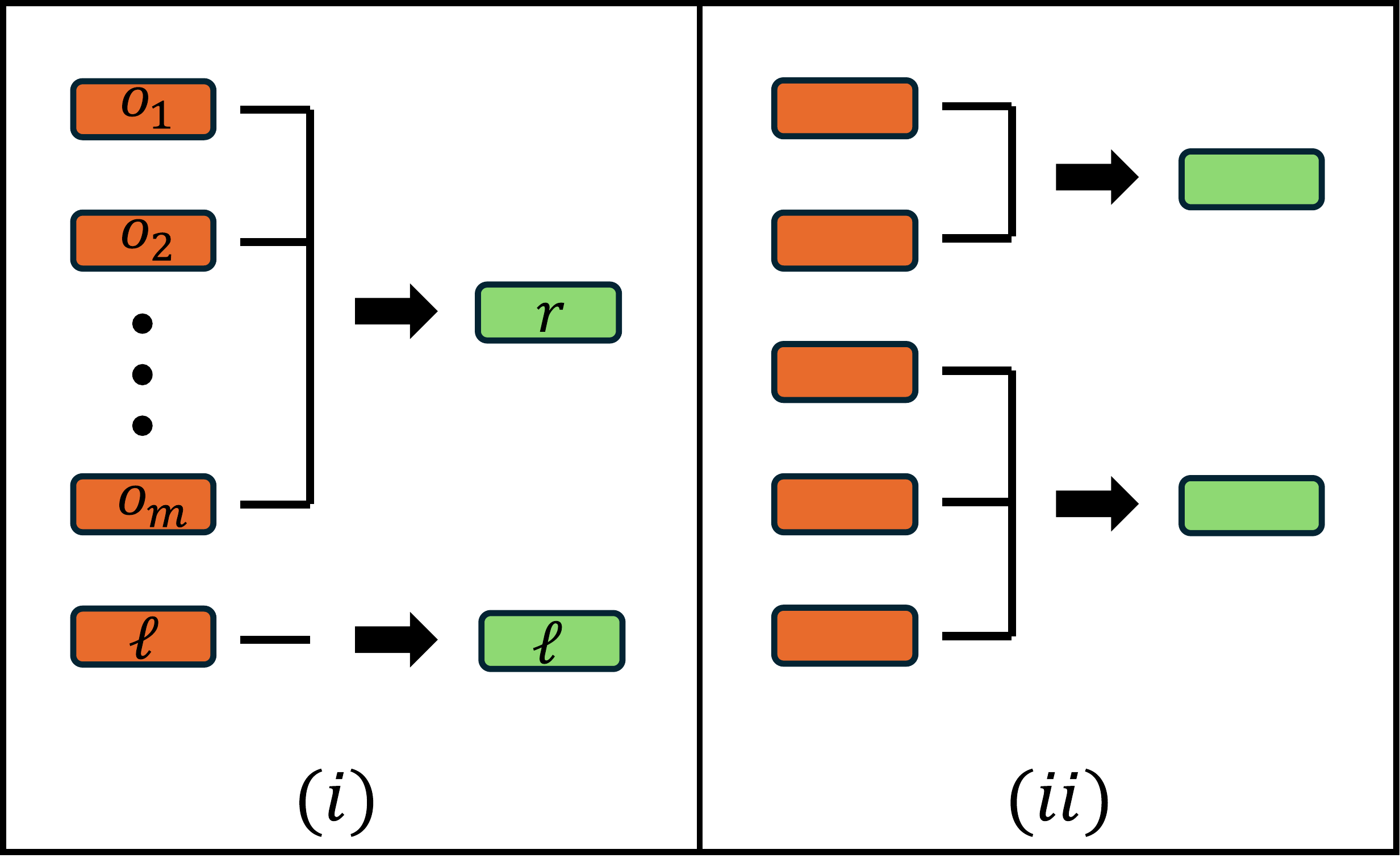}}

  \caption{\textbf{Motivating application.} 
  (a)~A robot manipulator tasked with grasping a milk carton in a cluttered refrigerator where the precise obstacle location is uncertain.
  (b)~Each obstacle is associated with a shadow, guaranteed to contain its true position  with a specified confidence level. Thus, the risk associated with a path corresponds to the ``deepest'' shadow it intersects. Here, path~$\pi_1$ is guaranteed to be collision free, while the other paths have maximal collision probabilities of $0.2,0.3,0.3,0.3,0.44=1-(1-0.2)(1-0.3),0.9$
  for $\pi_2, \pi_3, \pi_4, \pi_3+\pi_4,\pi_2+\pi_4,\pi_5$, respectively. Notice that when multiple obstacles exists, the risk of each obstacles needs to be accounted for independently. Hence, from start to~$u$, path~$\pi_2$ has lower risk than~$\pi_3$, however from start to goal path~$\pi_3+\pi_4$ has lower risk than~$\pi_2+\pi_4$.
  (c)~Pareto-optimal frontier at~$u$.
  (d.i)~Per-obstacle risks $o_1,\ldots,o_m$ are aggregated into a single risk objective $r$ (product) and paired with path length objective~$\ell$ to form the bi-objective $(r,\ell)$. (d.ii)~Example of a different aggregation.
  }
  \label{fig:motivation-risk-aggregation}
  \vspace{-6pt}
\end{figure*}
}

\begin{figure*}[!t]
  \centering
  \subfloat[]{\includegraphics[height=0.125\textheight]{img/introduction/motivation_real.pdf}}\hfill
  \subfloat[]{\includegraphics[height=0.125\textheight]{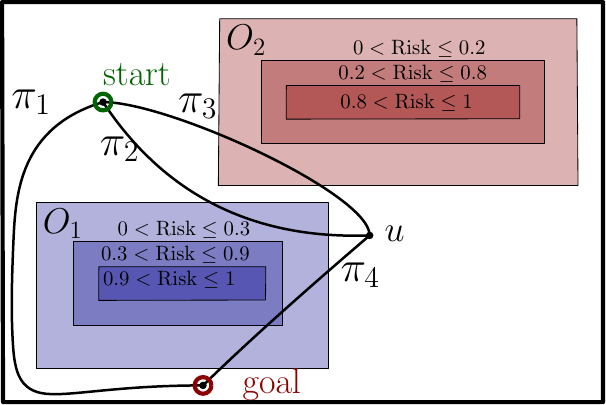}} \hfill
  \subfloat[]{\includegraphics[height=0.125\textheight]{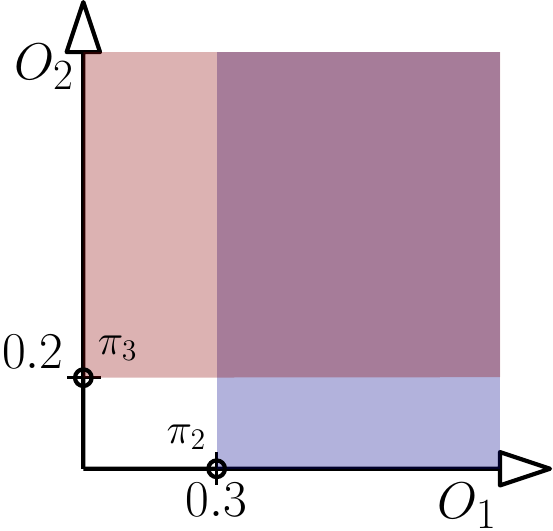}}\hfill
  \subfloat[]{\includegraphics[height=0.125\textheight]{img/introduction/aggregation_4.pdf}}

  \caption{\textbf{Motivating application.} 
  (a)~A robot manipulator tasked with grasping a milk carton in a cluttered refrigerator where the precise obstacle location is uncertain.
  (b)~Each obstacle in the configuration space is associated with a shadow, guaranteed to contain its true position  with a specified confidence level. Thus, the risk associated with a path corresponds to the ``deepest'' shadow it intersects. Here, path~$\pi_1$ is guaranteed to be collision free, 
  while paths $\pi_2, \pi_3, \pi_4$ have maximal collision probabilities of $0.3,0.2,0.9$, respectively.
  For $\pi_2 \cdot \pi_4$ and $\pi_3 \cdot \pi_4$ (here $\cdot$ denotes path concatenation), we have 
  maximal collision probabilities of
  $0.9$ and $1-(1-0.9)(1-0.2) = 0.92$, respectively. When multiple obstacles exists, the risk of each obstacle needs to be accounted for independently. Hence, from start to~$u$, path~$\pi_2$ has higher risk than~$\pi_3$, however from start to goal path~$\pi_2 \cdot \pi_4$ has lower risk than~$\pi_3 \cdot \pi_4$.
  (c)~Pareto-optimal frontier at~$u$.
  (d.i)~Per-obstacle risks $o_1,\ldots,o_m$ are aggregated into a single risk objective~$r$ (product) and paired with path length objective~$\ell$ to form the bi-objective $(r,\ell)$. (d.ii)~Example of a different aggregation.
  }
  \label{fig:motivation-risk-aggregation}
  \vspace{-6pt}
\end{figure*}

For instance, consider an application of motion-planning for a robot, in which it is desired to have \emph{short} paths and \emph{safe} paths, where safety is the complement of the risk, which corresponds to the collision probability of the robot with environment obstacles.
Recent work introduced the concept of \emph{shadows}~\cite{provably_safe}, which is a probabilistic geometric region that is guaranteed to contain the true position of an obstacle with a specified confidence level. The \emph{obstacle-level uncertainty} models obstacle uncertainty better  than  the standard and less realistic point-wise uncertainty. Here the overall safety is quantified through the product of non-collision probabilities with each of the obstacles in the environment.
Hence, to compute the safety of a path, one is required to reason about each obstacle \emph{independently}, where taking into account a shadow is done by monitoring the closest point along the path to the obstacle. This introduces interactions between obstacle-level collision risks and overall risk.
See Fig.~\ref{fig:motivation-risk-aggregation} for an illustration.

In MOS variants, as the one described above, the current best approach (see, e.g.,~\cite{FuKSA23}) is to cast the problem into standard MOS problems 
by introducing objectives that are required when considering partial paths but won't be part of the final solution (e.g., the risk of each obstacle in the example above will correspond to one objective). Once all solutions are computed, they can be post-processed (e.g., by aggregating risk among different obstacles in our running  example).
Unfortunately, the number of objectives required to model an underlying MOS problem may be large and the running time of MOS algorithms may grow exponentially with the number of objectives~\cite{bi-hard}. 
Thus, this approach is impractical for many settings, motivating more efficient problem representations.

We present a generalized MOS formulation that addresses the limitations of the approach described. Our key insight is to distinguish between \emph{hidden objectives}, which are incrementally computed during the search process, and \emph{solution objectives}, which are defined through aggregation functions over the hidden objectives. This abstraction enables the representation of a broad class of objective interactions while preserving compatibility with state-of-the-art MOS algorithms. Importantly, we show that only minimal algorithmic modifications are required—specifically, the extension of a small set of core operations to incorporate the aggregation functions.

We evaluate our new formulation in several diverse robotics planning problems, including various problems that require balancing path length and safety under obstacle uncertainty for scenarios inspired by 
navigation, manipulation and medical robotics. In these problems, the hidden objectives correspond to per-obstacle collision probabilities, which are aggregated to yield a goal objective of total collision probability. We instantiate the formulation in state-of-the-art MOS algorithms and provide empirical evidence that the extended variants outperform their standard counterparts when applied to these problems.

\textbf{Contribution.}
We present the following contributions: 
(i)~A generalized MOS problem formulation that separates hidden objectives from solution objectives via aggregation functions, enabling the modeling of complex objective interactions. 
(ii)~A demonstration that this formulation supports the application of state-of-the-art MOS algorithms with minimal changes to their core operations.
(iii)~Empirical evaluation spanning diverse problems depicting the efficacy of using this new formulation when compared to unmodified MOS algorithms.


\section{Related Work}
\label{sec:related_work}
In MOS, formally defined in Sec.~\ref{sec:mos}, we aim to compute a set of paths, called the \emph{Pareto-optimal frontier} (POF), that balance the trade-off between a given set of objectives. Namely, the set of paths where  no single path is strictly better than the others among all objectives. 
Exact state-of-the-art algorithms that compute the POF (see e.g.,~\cite{hernandez2023simple,mandow2010multiobjective,pulido2015dimensionality,ren2025emoa}) often build upon the celebrated~\astar algorithm~\cite{HNR68}.
Of particular interest to our work is 
\algname{NAMOA-dr}~\cite{pulido2015dimensionality} which can handle an  arbitrary number of objectives and is used in our evaluation.

In real-world settings we are often not interested in the entire POF---it's too large to present to decision makers and  computationally demanding to compute as the general problem is NP-hard~\cite{bi-hard}. 
In such settings, we seek  a bounded approximation of the POF, formally defined in Sec.~\ref{sec:mos}, which is both small and efficient to compute. 

As we outline in Sec.~\ref{sec:mos}, there is a simple approach to turning exact MOS algorithms to approximate ones. 
However, tailored, arguably more complex algorithms exists. They often  allow to approximate the POF much faster than the aforementioned simple adaption.
Notable examples include \ppastar~\cite {Goldin_Salzman_2021} and 
\apex~\cite{zhang2022pex}. Their efficiency stems from the observation that paths whose cost is very similar can be grouped in an efficient manner allowing to dramatically prune the POF.


Our work is motivated by recent endeavors that explore complex interactions between objectives in MOS.
This includes a number of papers on lexicographic and hierarchical search, that reflect preferences between objectives, in both deterministic (see e.g.,~\cite{shan2019lexicographic} for a single agent motion-planning setting and~\cite{zhao2024lexicographic} for a multi-agent setting), and stochastic planning problem setups (see e.g.,~\cite{miura2022heuristic} for heuristic search in stochastic shortest-path and~\cite{wray2015multi} for reward preferences in POMDPs).
Other notable directions involve different forms of aggregation, such as scalarizing multi-objective planning using weighted maximization~\cite{wilde2024scalarizing}, non-additive shortest-path problems~\cite{durand2024generalized} and various reformulations of chance-constrained planning (e.g., scenario-based motion-planning ~\cite{de2025scenario}).
A common theme for all these works is the need to \emph{tailor} algorithmic techniques to fit each new setting. This is addressed by our novel framework, which unifies many problem settings and allows to tackle all of them with the same set of tools.

\section{Algorithmic Background} 
\label{sec:mos}
In this section we introduce the notation required to formally define the MOS problem and describe a general algorithmic framework that can be used to solve the problem. For additional details, see e.g.~\cite{SalzmanF0ZCK23,SkylerSAFSC0K0U24}.
\subsection{Notation}
\label{subsec:mos_notation}

Boldface font indicates vectors, lower-case and upper-case symbols indicate elements and sets, respectively. The notation~$p_i$ will be used to  denote the $i$'th component of~$\mathbf{p}$. 
The addition of two $d$-dimensional vectors $\mathbf{p}$ and~$\mathbf{q}$ and the multiplication
of a real-valued scalar $k$ and a $d$-dimensional vector $\mathbf{p}$ are defined in the natural
way, namely as $\mathbf{p}+\mathbf{q}=(p_1+q_1,\ldots, p_d+q_d)$ and
$k\mathbf{p}=(kp_1,\ldots,kp_d)$, respectively. 

Let $\mathbf{p}$ and $\mathbf{q}$ be $d$-dimensional vectors.
For a minimization problem, we say that~$\mathbf{p}$ 
\emph{dominates}~$\mathbf{q}$ and denote this as $\mathbf{p} \preceq \mathbf{q}$ if $\forall i, p_i \leq q_i$.
%
%
%
We say that $\mathbf{p}$ is \emph{lexicographically  smaller} than~$\mathbf{q}$ and denote this as $\mathbf{p} \prec_{\rm{lex}} \mathbf{q}$ if~${p}_k < {q}_k$ for the first index~$k$ s.t. ${p}_k \neq {q}_k$  (or if $\forall i, p_i \leq q_i$).
Finally, 
let $\mathbf{p}$ and $\mathbf{q}$ be  two $d$-dimensional vectors
and let~$\boldsymbol{\varepsilon}$ be another $d$-dimensional vector such that $\forall i~\eps_i \geq 0$.
We say that $\mathbf{p}$ \emph{approximately dominates}~$\mathbf{q}$ with an \emph{approximation factor} $\boldsymbol{\eps}$ and denote this as $\mathbf{p} \preceq_{\boldsymbol{\eps}} \mathbf{q}$ if $\forall i, p_i \leq (1+\eps_i) \cdot q_i$.

\subsection{Setting}
\label{subsec:mos_setting}
A MOS \emph{graph} $G$ is a tuple $(V,E,\cost)$, where~$V$ is the finite
set of \emph{vertices}, $E\subseteq V\times V$ is the finite set of \emph{edges}, and
$\cost:E\rightarrow \mathbb{R}^d_{\geq 0}$ is a
\emph{cost function} that associates a $d$-dimensional vector of non-negative real costs with
each edge. 
A \emph{path} $\pi$ from $v_1$ to $v_n$ is a sequence of vertices $v_1, v_2, \ldots, v_n$
such that $(v_i,v_{i+1})\in E$ for all $i\in\{1,\ldots,n-1\}$. 

We define the \emph{cost} of a path $\pi = v_1,\ldots,v_n$ as $\cost(\pi)=\sum_{i=1}^{n-1} \cost(s_i,s_{i+1})$.
Given paths~$\pi$ and $\pi'$, we extend all the above definitions to paths. 
E.g., we say that~$\pi$  \emph{dominates} $\pi'$ and denote this as $\pi \preceq \pi'$ if 
 $\cost(\pi) \preceq \cost(\pi')$. 

Let~$G=(V,E,\cost)$ be a MOS graph and~$v_{\rm s},v_{\rm g} \in V$  start and goal vertices.
A path from $v_{\rm s}$ to $v_{\rm g}$ is called a \emph{solution}. 
The POF $\Pi^*$ is the set of all solutions s.t. 
for every solution $\pi \in \Pi^*$, there does not exists a solution~$\pi'$  s.t.~$\pi' \preceq \pi$.

\subsection{Algorithms}
\label{subsec:mos_algorithms}
\begin{algorithm}[t]
\small
\caption{\algname{MOS-A$^*$}}\label{alg:mos}

\textbf{Input:} 
    $G = (V,E,\mathbf{c})$; 
    $v_{\rm s}, v_{\rm g}$; 
    $\mathbf{h}$
    
\begin{algorithmic}[1]
    \State $n_{\text{root}} \leftarrow$ new node at $v_{\rm s}$ with 
        $\mathbf{g}(n_{\text{root}} )=\mathbf{0}$, 
        $\mathbf{f}(n_{\text{root}} )=\mathbf{h}(v_{\rm s})$
        \label{alg1_line:init_root}
    \State \textsc{Open}.insert($n_{\text{root}}$);~~
           \textsc{Sols}$\leftarrow \emptyset$;~~
           \textsc{Closed}$\leftarrow \emptyset$
           \label{alg1_line:init_lists}

    \While{\textsc{Open} $\neq \emptyset$} \label{alg1_line:while_loop}
        \State $n \leftarrow $ \texttt{get\_best\_node}(\textsc{Open}) \label{alg1_line:update_n}
        \If{\texttt{is\_dominated\_sol}
            (n,\textsc{Sols}) 
            or  {\color{white}. \hspace{28 mm}}
                {\color{white}.\hspace{3 mm}}
            \texttt{is\_dominated\_path}
            (n, \textsc{Closed}) 
            }  
            \label{alg1_line:if_dominated}
            \State \textbf{continue}
            \label{alg1_line:continue_dominated}
        \EndIf

        \If{\texttt{is\_solution}($n$)} \label{alg1_line:if_solution}
            \State \textsc{Sols}.insert($n$) \label{alg1_line:insert_sol}
            \State \textbf{continue} \label{alg1_line:continue_solution}
        \EndIf

        \For{\textbf{each} $v'$ s.t. $(v(n), v') \in E$} \label{alg1_line:for_loop}
            \State $n' \leftarrow$ new node at $v'$ \label{alg1_line:update_child}
            \State $\mathbf{g}(n')\leftarrow$\texttt{compute\_cost}($\mathbf{g}(n),
            \mathbf{c} \left(v(n), v(n') \right)$) \label{alg1_line:update_g_value}
            \State $\mathbf{f}(n')\leftarrow \mathbf{g}(n') + \mathbf{h}(v')$ 
            \label{alg1_line:update_f_value}
            \State \textsc{Open}.insert($n'$) \label{alg1_line:insert_child}
        \EndFor
        \State \textsc{Closed}.insert($n$) \label{alg1_line:close_parent}
    \EndWhile

    \State \textbf{return} \textsc{Sols} \label{alg1_line:return}
\end{algorithmic}
\end{algorithm}


\begin{algorithm}[t]
\footnotesize
\caption{\algname{MOS-A$^*$} functions (MOS)}\label{alg:func}

\textbf{function:} 
        \texttt{get\_best\_node}
            (\textsc{Open})
        
\begin{algorithmic}[1]

    \State \textbf{return}  $\arg \min_{\prec_{\rm lex}}$ 
                            $\{$ 
                            ${\color{blue} \mathbf{f}(n)}~\vert~ n\in$  {\textsc{Open}}
                            $\}$
            \hfill        /*
            {\color{magenta} 
                $\mathcal{F}_{\rm agg}(\mathbf{f}(n))$
            }

\end{algorithmic}

\textbf{function:} 
        \texttt{is\_dominated\_sol}
            ($n'$, \textsc{Sols})
        
\begin{algorithmic}[1]
 
    \State 
        \textbf{if} 
        $\exists n \in $~\textsc{Sols} s.t. 
        ${\color{blue}\mathbf{f}(n) \preceq \mathbf{f}(n')}$
        \textbf{then}
        \hfill        /*
        {\color{magenta} 
            $\mathcal{F}_{\rm agg}(\mathbf{f}(n))
            \preceq
            \mathcal{F}_{\rm agg}(\mathbf{f}(n'))$
        }
    \State 
        {\color{white}.\hspace{0.9 mm}}
        \textbf{return} true
    
\end{algorithmic}

\textbf{function:} 
        \texttt{is\_dominated\_path}
            ($n'$, \textsc{Closed})
        
\begin{algorithmic}[1]
    \If{$\exists n \in $~\textsc{Closed} s.t. 
        $v(n) = v(n')$~and~$\mathbf{g}(n) \preceq \mathbf{g}(n')$}
        \State \textbf{return} true
    \EndIf
\end{algorithmic}

\textbf{function:} 
    \texttt{compute\_cost} 
        ($  \mathbf{g_{\rm parent}}, 
            \mathbf{c}
         $) 
\begin{algorithmic}[1]
    \State \textbf{return} ${\color{blue}\mathbf{g_{\rm parent}} + \mathbf{c}}$
    \hfill /*
    {\color{magenta} $\mathcal{F}_{\rm ext}(\mathbf{g_{\rm parent}},  \mathbf{c})$}
\end{algorithmic}

\end{algorithm}

Recall (Sec.~\ref{sec:related_work}) that state-of-the-art MOS algorithms are often \astar-based, thus we  restrict our discussion to this family of algorithms (and assume basic familiarity with \astar).

\subsubsection{Exact MOS}
Following Skyler et al.~\cite{SkylerSAFSC0K0U24} we describe in Alg.~\ref{alg:mos} a simplified \astar-based MOS algorithm.
As we shall see shortly, our new formulation will be able to utilize the same algorithm by surgically replacing key operations. These are highlighted in colored text. For now, we instruct the reader to ignore any text in {\color{magenta}  magenta}.

We now briefly explain Alg.~\ref{alg:mos} which runs a best-first search over nodes representing solutions, and utilizes dominance checks to prune dominated solutions that are not part of the POF.
In Line~\ref{alg1_line:init_root} the root node and its $g,f$-values are initialized according to the start vertex~$v_s$.
In Line~\ref{alg1_line:init_lists} the data structures \textsc{Open}, \textsc{Closed} and \textsc{Sols} are initialized to include the root node and empty sets respectively.
Line~\ref{alg1_line:while_loop} loops over the nodes in \textsc{Open} until exhaustion, in which \textsc{Sols}, the POF is returned at Line~\ref{alg1_line:return}.
Line~\ref{alg1_line:update_n} updates the current node $n$ by removing the best node in \textsc{Open} according to the function \texttt{get\_best\_node}, which orders the nodes by lexicographic order of $f$ values.
It is worth noting that utilizing a lexicographic order allows to effectively perform a dimensionality reduction that enhances the node comparison operation (see~\cite{pulido2015dimensionality}) and that efficient variants exist (see the overview~\cite{SalzmanF0ZCK23}).
Line~\ref{alg1_line:if_dominated} checks if the node $n$ is dominated by either a solution which was already found via the function \texttt{is\_dominated\_sol} or by a better path to the node $n$ via the function \texttt{is\_dominated\_path}.
If the node $n$ is dominated then at Line~\ref{alg1_line:continue_dominated} the node is discarded and the search continues.
Line~\ref{alg1_line:if_solution} checks if $n$ is a solution. 
If true, then it is added to  \textsc{Sols} and the search continues without expanding the node at Lines~\ref{alg1_line:insert_sol}-\ref{alg1_line:continue_solution}.
Line~\ref{alg1_line:for_loop} loops over the children (neighbors) of $n$ in the graph $G$.
After a new child is generated at Line~\ref{alg1_line:update_child}, its $g$-value is computed at Line~\ref{alg1_line:update_g_value} using the function \texttt{compute\_cost}.
Then, its $f$-value is computed at Line~\ref{alg1_line:update_f_value}, and inserted into \textsc{Open} at Line~\ref{alg1_line:insert_child}.
After $n$ is expanded, it is inserted into \textsc{Closed} at Line~\ref{alg1_line:close_parent}.

\subsubsection{Approximate MOS}
Recall (Sec.~\ref{sec:related_work}), that in many real-world settings, we are often interested in  a bounded approximation of $\Pi^*$ which is both small and efficient to compute.
To this end, given an approximation factor~$\boldsymbol{\eps}$, an \emph{$\boldsymbol{\eps}$-approximate POF} $\Pi^*_{\boldsymbol{\eps}}$ is a set of solutions such that every path in~$\Pi^*$ is $\boldsymbol{\eps}$-dominated by a path in~$\Pi^*_{\boldsymbol{\eps}}$. 

A general approach to compute~$\Pi^*_{\boldsymbol{\eps}}$ using an exact MOS algorithm (see e.g.,~\cite{GS21})
is to replace the domination test $\preceq$ used in solution domination (i.e., Line~1 in \texttt{is\_dominated\_sol}) with approximate domination test~$\preceq_{\boldsymbol{\varepsilon}}$.

\section{Generalizing MOS via Objective Aggregation} 
\label{sec:gen_mos}
As we will see in Sec.~\ref{sec:applications}, many applications do not adhere to the MOS setting described in Sec.~\ref{sec:mos}. Roughly speaking, this is because 
(i)~path cost is not necessarily additive and, more importantly,
(ii)~to compute a final solution, we may need to track multiple objectives for indeterminate nodes.
We start by introducing two new functions and use them to generalize the MOS problem (Sec.~\ref{sec:mos}). We then continue to describe a straw man algorithm to solve this problem and conclude by introducing a simple-yet-effective adaption to Alg.~\ref{alg:mos}.

\subsection{Setting}
\label{subsec:gen_mos_setting}
Let $G=(V,E,\mathbf{c})$ with $\cost:E\rightarrow \mathbb{R}^d_{\geq 0}$
be a MOS graph.
Consider monotonically non-decreasing functions 
$\mathcal{F}_{\rm ext} : \mathbb{R}^m \times \mathbb{R}^d \rightarrow \mathbb{R}^m$
and 
$\mathcal{F}_{\rm agg} : \mathbb{R}^m \rightarrow \mathbb{R}^k$,
denoted as 
a \emph{path-cost extension function} 
and 
an \emph{objective aggregation function}, respectively.
We call~$m$ and $k$ the 
\emph{hidden objective dimensionality}
and 
\emph{solution objective dimensionality}, respectively.

\paragraph{Path cost}
We generalize the notion of path cost as follows:
Given path $\pi = v_1,\ldots,v_n$, let $\pi^i = v_1,\ldots,v_i$ be the partial path composed of the first $i$ vertices of $\pi$ for some $i\in\{1, \ldots n \}$.
We define the \emph{cost w.r.t. a path-cost extension function} 
recursively as follows 
$\cost_{\mathcal{F}_{\rm ext}}(v_1):= \boldsymbol{0}$ (vector of zeros with dimension $m$) and for $i>1$ we define
$\cost_{\mathcal{F}_{\rm ext}}(\pi^i):= \mathcal{F}_{\rm ext} (\cost_{\mathcal{F}_{\rm ext}}(\pi^{i-1}),\cost(v_{i-1},v_i))$.

\vspace{1mm}
\noindent
\textbf{Note.}
When $\mathcal{F}_{\rm ext}:=(\mathbf{g}, \mathbf{g}')= \mathbf{g} + \mathbf{g}'$ with $m=d$, this is the standard definition of additive path cost. We call such a function a \emph{trivial} path-cost extension function.

\paragraph{Solution cost}
We generalize the notion of the POF as follows:
Given a solution $\pi$, we define \emph{solution cost w.r.t. an objective-aggregation function}
as $\cost_{\mathcal{F}_{\rm agg}}(\pi):=\mathcal{F}_{\rm agg}(\cost_{\mathcal{F}_{\rm ext}}(\pi))$.
Similarly, the 
\emph{POF  w.r.t. an objective-aggregation function}~%
$\Pi^*_{\mathcal{F}_{\rm agg}}$ is the set of all solutions such that ~$\forall \pi \in \Pi^*_{\mathcal{F}_{\rm agg}}$, there does not exists a solution~$\pi'$  
s.t.~$\cost_{\mathcal{F}_{\rm agg}}(\pi') \prec \cost_{\mathcal{F}_{\rm agg}}(\pi)$.

\vspace{1mm}
\noindent
\textbf{Note.}
When $\mathcal{F}_{\rm agg}$ is the identity function (and thus $m=k$), this is the standard definition of POF. We call such a function a \emph{trivial} objective-aggregation function.

\paragraph{Examples}
We describe below examples of non-trivial path extension and objective-aggregation functions. 
As we will see in Sec.~\ref{sec:applications}, these model natural robotic applications.
\begin{align}
\label{eq:ext1}
\mathcal{F}_{\rm ext}(\mathbf{c}, \mathbf{c}') = \big(
    &\max(c_1,c_1'), \ldots, \max(c_{m-1} , c_{m-1}'),\notag \\
    &c_m + c_m' \big),
\end{align}
\begin{equation}
\label{eq:agg1}
    \mathcal{F}_{\rm agg} (\mathbf{c}) = \left(1 - \Pi_{i=1}^{m-1}(1-c_i), c_m \right),
\end{equation}
\begin{equation}
\label{eq:agg2}
    \mathcal{F}_{\rm agg} (\mathbf{c}) = 
            \left( (m-1) - \sum_{i=1}^{m-1} c_i, c_m \right).
\end{equation}

Note that in Eq.~\eqref{eq:ext1} it holds that $m=d$, and in Equations~\eqref{eq:agg1}-\eqref{eq:agg2} we have $k=2$.
{These aggregations are captured by Fig.~\ref{fig:motivation-risk-aggregation}d.i. 
Fig.~\ref{fig:motivation-risk-aggregation}d.ii presents a more-general aggregation which,  for ease of presentation, are not included here.}

\subsection{Algorithms}
\label{subsec:gen_mos_algorithms}
Given a MOS problem with a non-trivial path extension function but a trivial objective-aggregation function, we can run almost any MOS algorithm. The only change that is required is updated path cost computation. Returning to our representative MOS algorithm (Alg.~\ref{alg:mos}), this corresponds to updating \texttt{compute\_cost} to use ${\mathcal{F}_{\rm ext}}$. This corresponds to replacing the {\color{blue} blue} text in the function with the {\color{magenta} magenta} text.

When the objective-aggregation function is non-trivial and we wish to compute $\Pi^*_{\mathcal{F}_{\rm agg}}$, one requires slightly more care.
A straw man approach to compute $\Pi^*_{\mathcal{F}_{\rm agg}}$ would be to run any  MOS algorithm and compute $\Pi^*$ which corresponds to the POF over the hidden objectives. Then, for every solution $\pi \in \Pi^*$ compute its solution cost $\cost_{\mathcal{F}_{\rm agg}}(\pi)$. Subsequently, we compute the undominated solutions set among these costs.

As the hidden objective dimensionality $m$ is often much larger than the solution objective dimensionality $k$, the size of the POF over the hidden objectives may be much larger than the size of $\Pi^*_{\mathcal{F}_{\rm agg}}$ rendering the straw man approach highly inefficient.
To this end, we highlight which changes need to be made to our representative MOS algorithm (Alg.~\ref{alg:mos}).

The first change required corresponds to  the ordering function of the priority queue \textsc{Open}.
To employ standard MOS tools such as dimensionality reduction~\cite{pulido2015dimensionality}, \textsc{Open} needs to be ordered lexicographically according to $\mathcal{F}_{\rm agg}(\mathbf{f}(n))$ and not lexicographically according to $\mathbf{f}(n)$. Returning to Alg.~\ref{alg:mos}, this corresponds to replacing the {\color{blue} blue} text with the {\color{magenta} magenta} text in  \texttt{get\_best\_node}.
This  means that the heuristic function $\boldsymbol{h}$ should also be computed according to the aggregate cost~$\mathcal{F}_{\rm agg}$.

The second change is concerned with domination checks. 
Since solution cost can't be computed for a node that is not a solution, path domination is performed w.r.t. the hidden objectives (i.e., no changes to function \texttt{is\_dominated\_path}).
Solution domination however, can be updated. 
Here, we simply need to check domination w.r.t.  $\mathcal{F}_{\rm agg}(\mathbf{f}(n))$. Again,  this corresponds to replacing the {\color{blue} blue} text with the {\color{magenta} magenta} text in function \texttt{is\_dominated\_solution}.

To differentiate between the approaches, we denote by $(k,m)$-$\algname{ALG}_\varepsilon$ an MOS algorithm \algname{ALG} together with approximate dominance test $\varepsilon$ using $k$ and $m$ hidden and solution objective dimensionality, respectively. 
The straw man approach is when $k=m$, whereas our proposed objective aggregation corresponds to~$k<m$.

\begin{theorem}
    Let \algname{ALG} be some MOS algorithm and consider a MOS problem with $k$ and $m$ solution and hidden objectives for some~$k<m$, respectively, and with 
    monotonically non-decreasing path extension and objective-aggregation functions $\mathcal{F}_{\rm ext}$ and $\mathcal{F}_{\rm agg}$.
    If $(m,m)$-\algname{ALG} returns a POF to the MOS problem
    then $(k,m)$-\algname{ALG} (using the {\color{magenta} magenta} text instead of the {\color{blue} blue} text in Alg.~\ref{alg:mos}) also returns a POF to the MOS problem.
\end{theorem}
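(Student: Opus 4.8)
The plan is to prove that $(k,m)$-\algname{ALG} returns a valid aggregated frontier $\Pi^*_{\mathcal{F}_{\rm agg}}$ (one solution per aggregated Pareto cost) by separately establishing \emph{soundness} and \emph{completeness}, using the assumed correctness of the straw-man $(m,m)$-\algname{ALG} as the baseline: since $(m,m)$-\algname{ALG} computes the hidden POF $\Pi^*$ and filters it through $\mathcal{F}_{\rm agg}$, its correctness certifies in particular that the underlying hidden-objective search is correct, i.e. that $\mathbf{f}(n)=\mathbf{g}(n)+\mathbf{h}(v(n))$ hidden-lower-bounds the hidden cost of every completion of $n$, and that hidden path domination is safe. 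The workhorse is a single monotonicity lemma, which I would prove first: if two partial paths $\pi,\pi'$ end at the same vertex and $\cost_{\mathcal{F}_{\rm ext}}(\pi) \preceq \cost_{\mathcal{F}_{\rm ext}}(\pi')$, then for every common continuation $\sigma$ we have $\cost_{\mathcal{F}_{\rm ext}}(\pi\cdot\sigma) \preceq \cost_{\mathcal{F}_{\rm ext}}(\pi'\cdot\sigma)$ and hence $\mathcal{F}_{\rm agg}(\cost_{\mathcal{F}_{\rm ext}}(\pi\cdot\sigma)) \preceq \mathcal{F}_{\rm agg}(\cost_{\mathcal{F}_{\rm ext}}(\pi'\cdot\sigma))$; this follows by induction on $|\sigma|$ from the monotonic non-decrease of $\mathcal{F}_{\rm ext}$ followed by that of $\mathcal{F}_{\rm agg}$.

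Next I would treat the three surgical changes in turn. For \texttt{is\_dominated\_path} (unchanged, still hidden $\mathbf{g}$-domination), the lemma shows that whenever a prefix is pruned there survives a prefix of no-greater hidden cost whose every completion aggregated-dominates the corresponding completion of the pruned one, so no aggregated-optimal cost is lost. For \texttt{get\_best\_node} (now lexicographic in $\mathcal{F}_{\rm agg}(\mathbf{f})$) and \texttt{is\_dominated\_sol} (now $\mathcal{F}_{\rm agg}(\mathbf{f}(n)) \preceq \mathcal{F}_{\rm agg}(\mathbf{f}(n'))$), the key point is that admissibility is \emph{inherited} rather than re-assumed: because $\mathbf{f}(n)$ hidden-lower-bounds every completion's hidden cost (certified by correctness of $(m,m)$-\algname{ALG}), monotonicity of $\mathcal{F}_{\rm agg}$ immediately yields that $\mathcal{F}_{\rm agg}(\mathbf{f}(n))$ lower-bounds the aggregated cost of every completion. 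Granted this, pruning an intermediate node $n$ when some $s\in\textsc{Sols}$ satisfies $\mathcal{F}_{\rm agg}(\mathbf{f}(s))\preceq\mathcal{F}_{\rm agg}(\mathbf{f}(n))$ is sound, and the lexicographic $\mathcal{F}_{\rm agg}$-ordering extracts solutions in non-decreasing aggregated cost, so \textsc{Sols} never retains an aggregated-dominated pair.

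I would then assemble the two directions. For completeness I would first show that every aggregated Pareto cost $\mathbf{c}^*$ is attained by some \emph{hidden}-Pareto-optimal solution (if a witness is hidden-dominated, the dominator aggregates to something $\preceq \mathbf{c}^*$, hence $=\mathbf{c}^*$ by optimality), and then trace that along such a solution $(k,m)$-\algname{ALG} always keeps at least one surviving prefix: neither path domination (by the lemma) nor solution domination (which fires only when $\mathbf{c}^*$ is already represented) can eliminate all of them, so a solution of aggregated cost $\mathbf{c}^*$ reaches \textsc{Sols}. For soundness I would invoke the \texttt{is\_dominated\_sol} invariant together with the $\mathcal{F}_{\rm agg}$-ordering and inherited admissibility to conclude that each retained solution is aggregated-non-dominated and lies on the aggregated frontier.

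The main obstacle I anticipate is the clash between the two notions of domination living inside one run: path-level pruning operates in the $m$-dimensional hidden space, while ordering and solution-level pruning operate in the $k$-dimensional aggregated image. The delicate case is \emph{ties}, where distinct hidden costs collapse to a single aggregated cost and a hidden-dominated (but aggregated-equal) prefix may be pruned; I must argue that a prefix still survives that can realize the shared aggregated-optimal value. This is exactly where the non-strict monotonicity of $\mathcal{F}_{\rm agg}$ is essential, and it is also what justifies keeping only one representative per aggregated cost. Making this tie-handling rigorous, rather than the individual soundness and completeness steps, is where the real work lies.
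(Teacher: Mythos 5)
Your proposal is correct and follows essentially the same route as the paper's proof: completeness from exhausting \textsc{Open} together with the unchanged hidden-space path-domination test, ordering and soundness from the monotonicity of $\mathcal{F}_{\rm agg}\circ\mathcal{F}_{\rm ext}$ (so the aggregated $\mathbf{f}$-values remain admissible and aggregated-optimal solutions are extracted before the non-optimal ones they dominate), and the aggregated \texttt{is\_dominated\_sol} test then discarding the latter. The only difference is rigor: your explicit prefix-domination lemma (hidden $\mathbf{g}$-domination of same-vertex prefixes is preserved under every common continuation and survives aggregation) and your treatment of ties under $\mathcal{F}_{\rm agg}$ are precisely the steps the paper's four-part sketch asserts without proof, so your plan is if anything more complete than the published argument.
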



\noindent
Proof omitted due to lack of space~
\footnote{https://tinyurl.com/mytheoremproof}.
However, it is interesting to note that the monotonicity requirement on $\mathcal{F}_{\rm ext}$ can be relaxed, as long as monotonicity is preserved for the solution objectives, defined by $\cost_{\mathcal{F}_{\rm agg}}(\pi):=\mathcal{F}_{\rm agg}(\cost_{\mathcal{F}_{\rm ext}}(\pi))$. We will present such a use case in Sec.~\ref{subsec:route}.





\section{Representative Applications} 
\label{sec:applications}
\subsection{Inspection Planning}
\label{subsec:ip}

In robot inspection planning~\cite{AlpertSKS25,FuKSA23} we are given a robot $\mathcal{R}$ and a set of $q$ points of interests (POI) $\mathcal{I} = \{ p_1,\, \ldots p_q \}$. $\mathcal{R}$ is equipped with a sensor $S$ and we are tasked with planning the motion of~$\mathcal{R}$ such that the sensor $S$ covers as many POIs as possible along its path and minimizes the length of the path. 
As is common in motion-planning (see, e.g.,~\cite{OrtheyCK24}), some works first sample a roadmap $G = (V,E)$ (a graph embedded in $\mathcal{R}$'s configuration space) and solve this discretized problem which is termed the \emph{graph inspection planning} problem.

Interestingly, Fu et al.~\cite{FuKSA23} suggested an algorithm that follows exactly our formulation (though stated in terms that are specific to graph inspection planning) and which inspired the \algname{A$^*$pex} algorithm.
Specifically, each POI $p_i$ induces a hidden objective $o_i$ that is set to one if it was sensed somewhere along a path and zero otherwise. 
Thus, we have here $m = q+1$ hidden objectives (one for each POI and one for path length). 
When extending path~$\pi$ by edge $e$,
the path extension function 
(i)~sets hidden objective $o_i$ for $i\in \{1, \ldots, q\}$ to one if it was sensed along~$\pi$ or along $e$
and
(ii)~accumulates path length. This is captured exactly by Eq.~\eqref{eq:ext1}.
The final objective is to minimize both the number of POIs that were not seen as well as to minimize the length. 
This is captured exactly by Eq.~\eqref{eq:agg2}. 
Notice that $c_i, i\in\{1,...,m-1\}$ in both equations are binary variables, s.t. $c_i\in\{0,1\}$.
For additional details, see~\cite{FuKSA23}. 

\subsection{Planning under Obstacle Uncertainty (OU)}
\label{subsec:shadows}
In the problem of planning under obstacle uncertainty (OU), also  discussed  in Sec.~\ref{sec:intro}, we are given a robot $\calR$ operating in workspace $\calW$ consisting of $q$ obstacles \(\mathcal{O} = \{ O_1, \ldots, O_q \}\). 
The exact location of each obstacle is associated with a probabilistic distribution termed \emph{shadow}~\cite{provably_safe} (to be formalized shortly) and we are tasked with planning the motion of $\calR$ while minimizing the risk of its path---defined in terms of collision probability with any obstacle in the environment---and its length. 

We note that similar problems which build upon shadow formulation such as
minimizing collision risk~\cite{axelrod2022efficient} or 
maintaining risk below  a specified threshold~\cite{Trajectory_optimization} 
were recently introduced.
As modeling obstacle uncertainty using shadows is a relatively new concept, for completeness,  we formally describe the problem.

A \emph{configuration} $x$ is a $t$-dimensional vector that uniquely identifies the position and orientation of $\calR$ in $\calW$.
The set~$\calX$ of all robot configurations is called the \emph{configuration space}. 
Let $\shape: \calX \rightarrow 2^\calW$ be a function that maps each configuration $x$ to the volume occupied by $\calR$ when placed at~$x$.

In contrast to standard motion-planning problems, where obstacles are known with certainty, we consider a setting in which obstacles may have uncertain shape, size, or location, modeled using a known distribution: \emph{shadows}.
\begin{defin}[Shadows~\cite{provably_safe}]
    For each obstacle $O_i \in \mathcal{O}$, we define a sequence of volumes called \emph{shadows},~$\mathcal{S}(O_i) = \{ S^i_0, S^i_1, \dots \}$,
    where $\forall i,j : S^i_j \subseteq \mathcal{W}, S^i_j \subseteq S^i_{j+1}$.
\end{defin}

Ideally, we would like to utilize the exact collision probability of~$\calR$ at~$x$ with an obstacle $O_i$. 
However, we do not assume that we do not have access to it.
Instead, let $r_{\text{max}}: \mathcal{S} \rightarrow [0,1]$ associate each shadow with an upper bound on the collision probability with its underlying obstacle.
Namely, for every configuration $x$, we have that,
$$r_{\text{max}}(S^i_j) \geq \text{Prob.}\big[\, \shape(x) \cap O_i \neq \emptyset~|~\shape(x) \cap S^i_j \neq \emptyset \,\big].$$
Then, we define the \emph{shadow's risk} as an upper bound on the collision probability
of~$\calR$ to intersect shadow $S^i_j$ of obstacle~$O_i$ when at configuration~$x$ to be $\texttt{risk}(x,S^i_j) = r_{\text{max}}(S^i_j)$. 
Here, we assume that risk monotonically increases for smaller shadows.
Namely, a smaller shadow has a \emph{higher} risk upper bound. Namely,~$\forall i,j, j'~s.t.~j<j'$ it holds that~$r_{\text{max}}(S^i_j)>r_{\text{max}}(S^i_{j'})$.



Note that the robot may simultaneously intersect several shadows of the same obstacle.
Thus, it is useful to define the notion of an \emph{effective shadow} of obstacle~$O_i$ which is the smallest shadow that intersects the robot.
Namely, the effective shadow of  $\calR$ at configuration $x$ for obstacle $O_i$ is defined as~$S_{\text{eff}}^i(x) :=
    S^i_{\, \arg\min_{j} \{\, j \mid \shape(x) \cap S^i_j \neq \emptyset \,\} }$.
Then, an \emph{obstacle's risk}, denoted by~$\texttt{risk}(x, O_i)$ is the risk of the effective shadow at~$x$. I.e.,~$\texttt{risk}(x, O_i) := \texttt{risk}(x,S_{\text{eff}}^i(x))$. 


A robot's \emph{path} $\pi: [0,1] \rightarrow \calX$ is a continuous mapping from the interval $[0, 1]$ into the configuration space. 
%
We define \emph{path obstacle's risk}, denoted by $\texttt{risk}(\pi, O_i)$, as the maximal obstacle risk along the path.
Namely,~$\texttt{risk}(\pi, O_i) := \max_{\alpha \in [0,1]}\{ \texttt{risk}(\pi(\alpha), O_i)\}$. 
To define the risk of a path, we assume that the risk of intersecting the different obstacles is \emph{mutually independent}.
Thus, the risk of a path $\pi$ is defined as~$\texttt{risk}(\pi): = 1- \prod_{O_i \in \calO}(1 - \texttt{risk}(\pi, O_i))$.
Examples of effective shadows and path risk are shown in Fig~\ref{fig:motivation-risk-aggregation}b.

As in the inspection planning problem, here we limit the problem to searching over a roadmap $G =(V,E)$ embedded in $\mathcal{X}$ given start and target vertices $v_s,v_t \in V$ that correspond to $\calR$'s  start and target configurations.
Set $\Pi(v_s, v_t)$ be the set of all paths connecting $v_s$ to $v_t$ in $G$.
The \emph{minimum-risk and path length motion-planning} problem calls for computing the POF of paths in $\Pi(v_s, v_t)$ that are not dominated by any solution with respect to risk and path length.

\ignore{
We can now define the problem we wish to address: 
\begin{prob}[Minimum-risk and path length motion-planning problem]
\label{prob:1}
Let~$\calR$ be a robot operating in a workspace $\calW$ populated by a set $\calO = \{O_1, \ldots O_q\}$ of $q$ obstacles and let $\calX$ denote its configuration space.
Let $x_s, x_t \in \calX$ be start and target configurations, respectively, and let $\Pi(x_s, x_t)$ be the set of all paths connecting $x_s$ to $x_t$.
The \emph{minimum-risk and path length motion-planning} calls for computing the POF made up of every path $\pi^* \in \Pi(x_s, x_t)$ with minimal risk and path length.
Namely,
$\pi^* = 
  \arg \min_{\pi \in \Pi(x_s, x_t)}
    \left(\texttt{risk}(\pi),(|\pi|)\right)$.
\end{prob}

We note that the above formulation is a continuous problem setting, where often motion-planning problems are tackled by sampling a roadmap and then solving a discrete version of the problem. Indeed, this is also the approach we take in our empirical evaluation discussed in Section~\ref{sec:experiments}.
}

In terms of the objective aggregation framework, 
computing all paths in the POF
can be done by relating each obstacle~$O_i$ with a hidden objective $o_i$, and another hidden objective for the path length. Overall, this results in $m=q+1$ hidden objectives.
When extending a path~$\pi$ by an edge $e$, the path extension function 
(i) evaluates the obstacle risk per hidden objective $o_i$ for $i\in\{1,...,q\}$ and (ii) accumulates path length. This is captured by Eq.~\eqref{eq:ext1}.
The solution objectives of minimizing path risk and path length are captured by Eq.~\eqref{eq:agg1}.

\begin{figure*}[t]
    \centering
    \captionsetup[subfigure]{justification=centering,singlelinecheck=false}

    \begin{subfigure}[t]{0.24\textwidth}
        \centering
        \includegraphics[height=0.11\textheight]{\figType{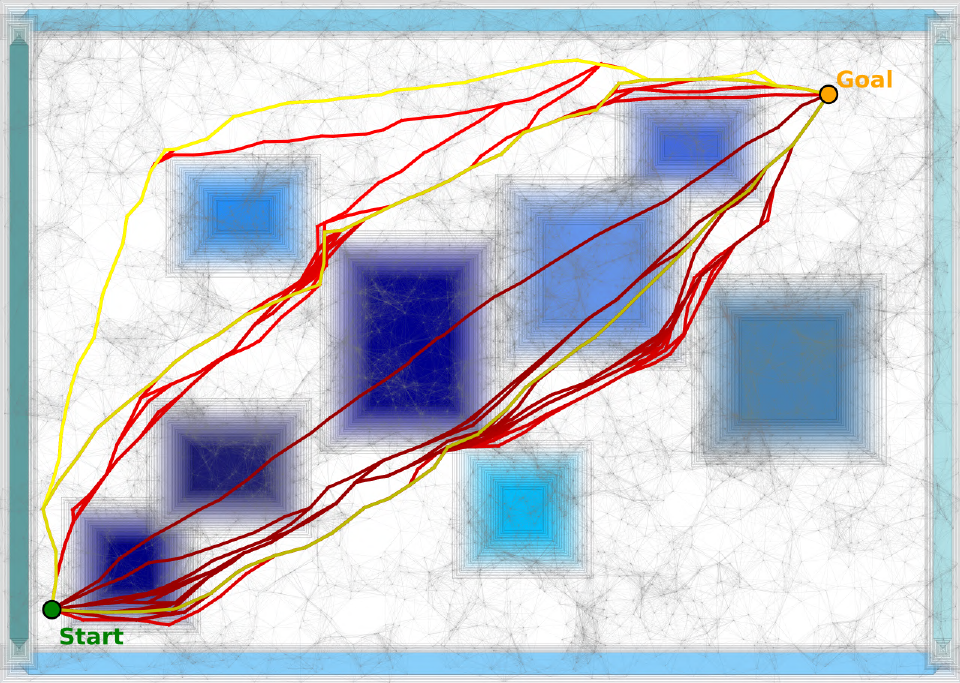}{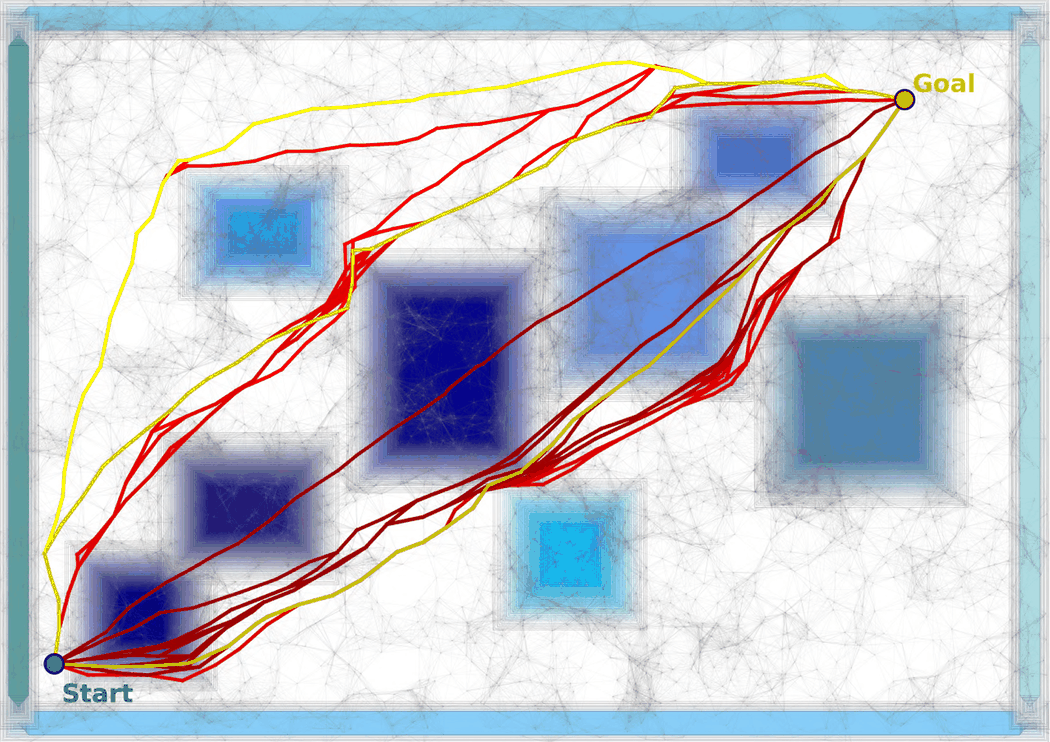}}
        \caption{}\label{Fig_point_env}
    \end{subfigure}
    \hfill
    \begin{subfigure}[t]{0.24\textwidth}
        \centering
        \includegraphics[height=0.11\textheight]{\figType{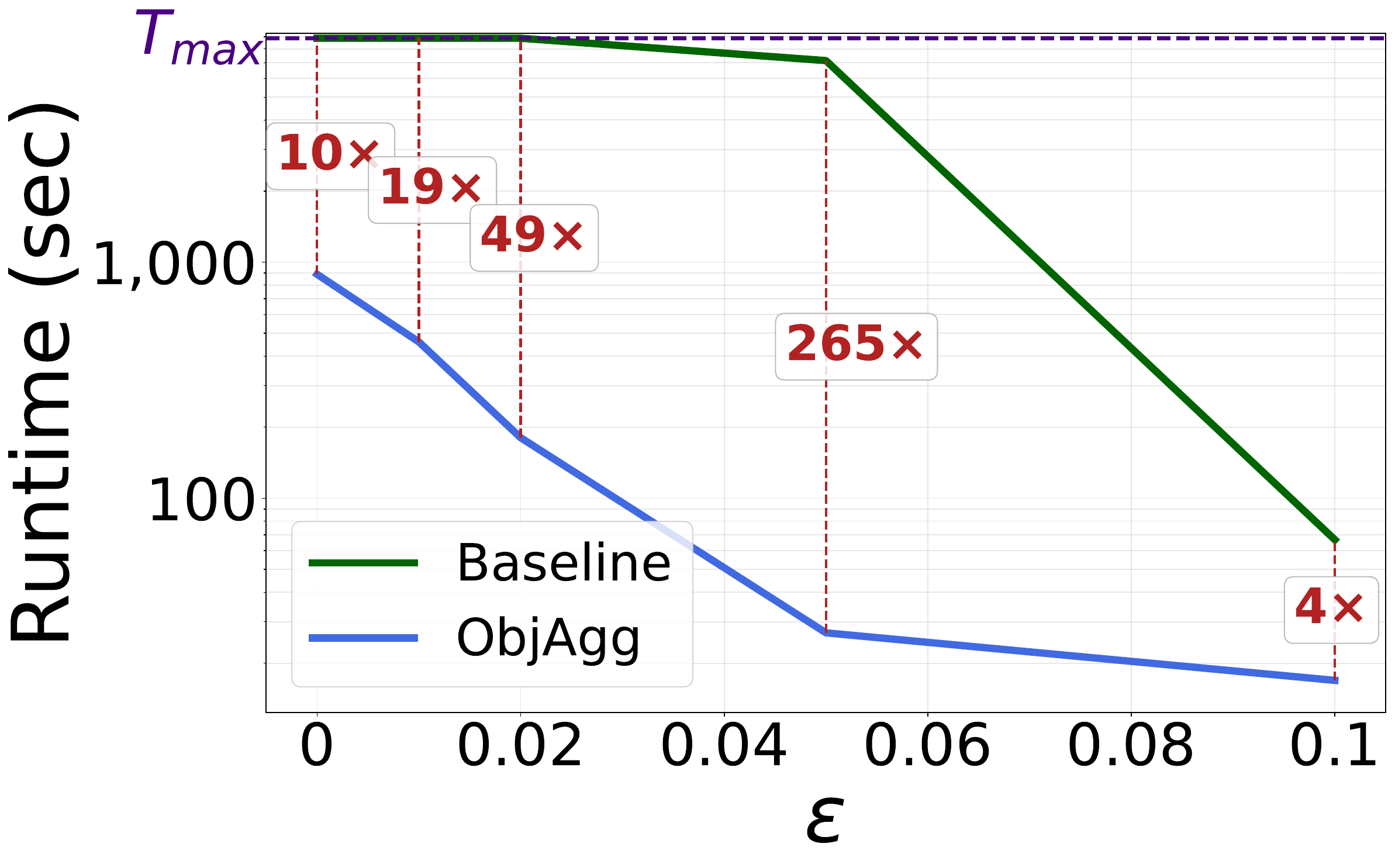}{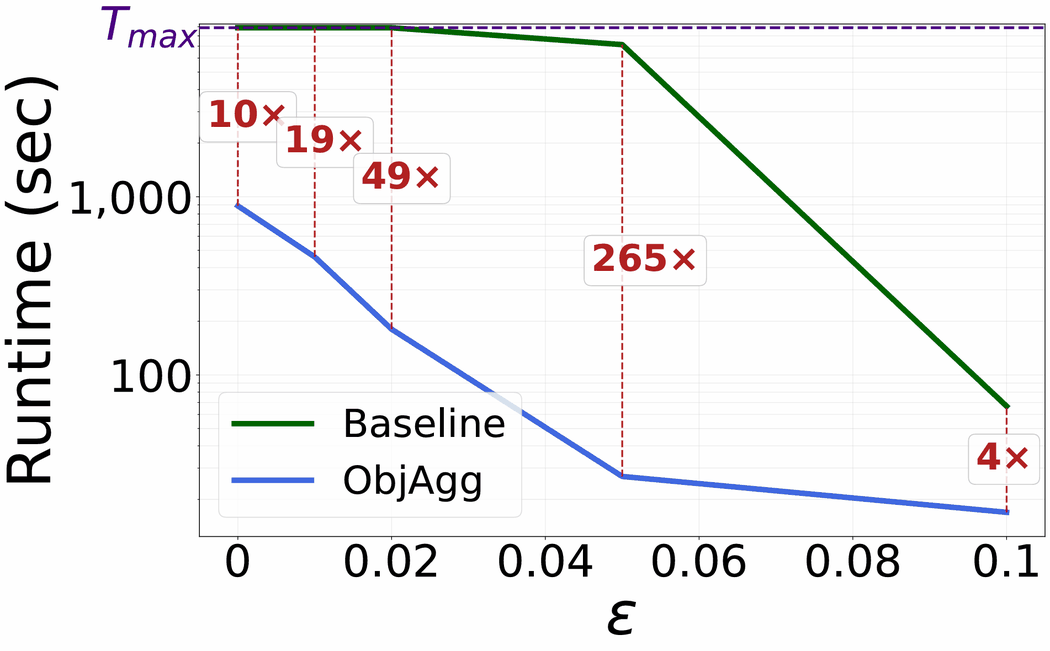}}
        \caption{}
        \label{fig:speedup-runtime_base}
    \end{subfigure}
    \hfill
    \begin{subfigure}[t]{0.24\textwidth}
        \centering
        \includegraphics[height=0.11\textheight]{\figType{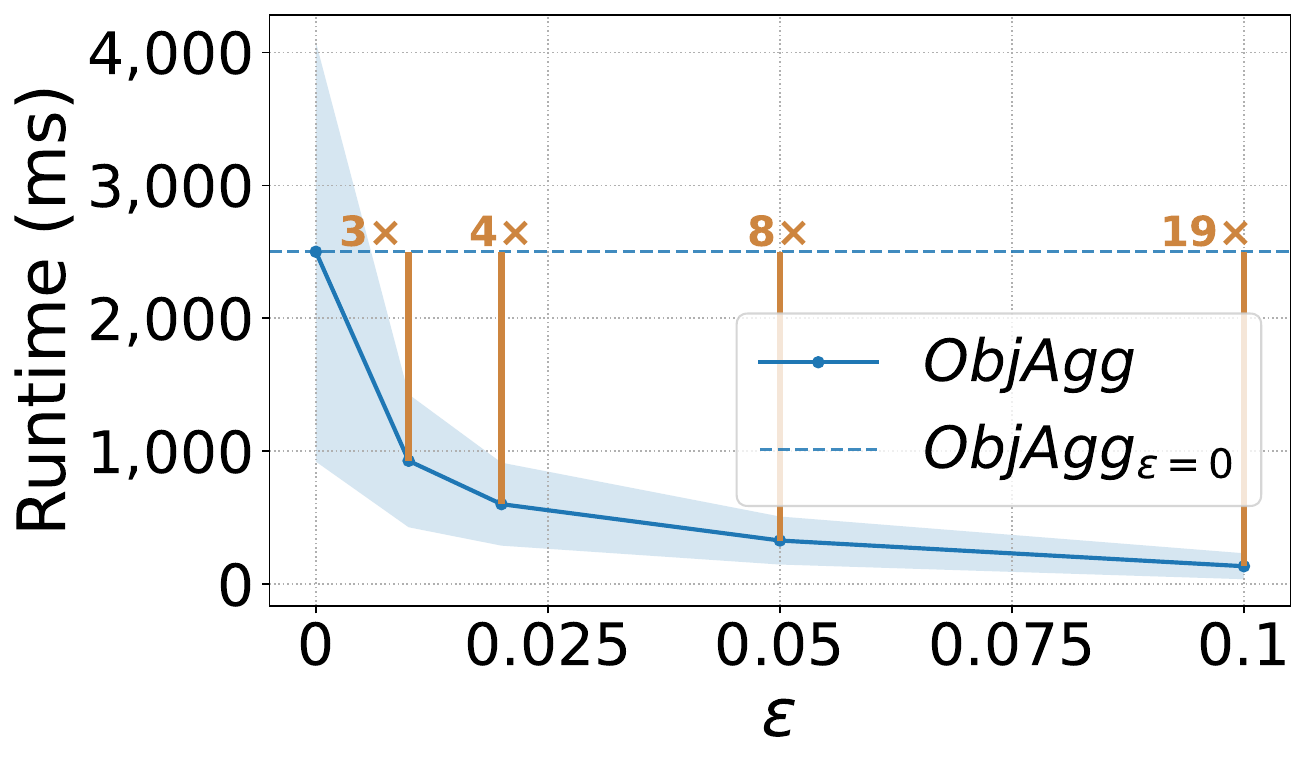}{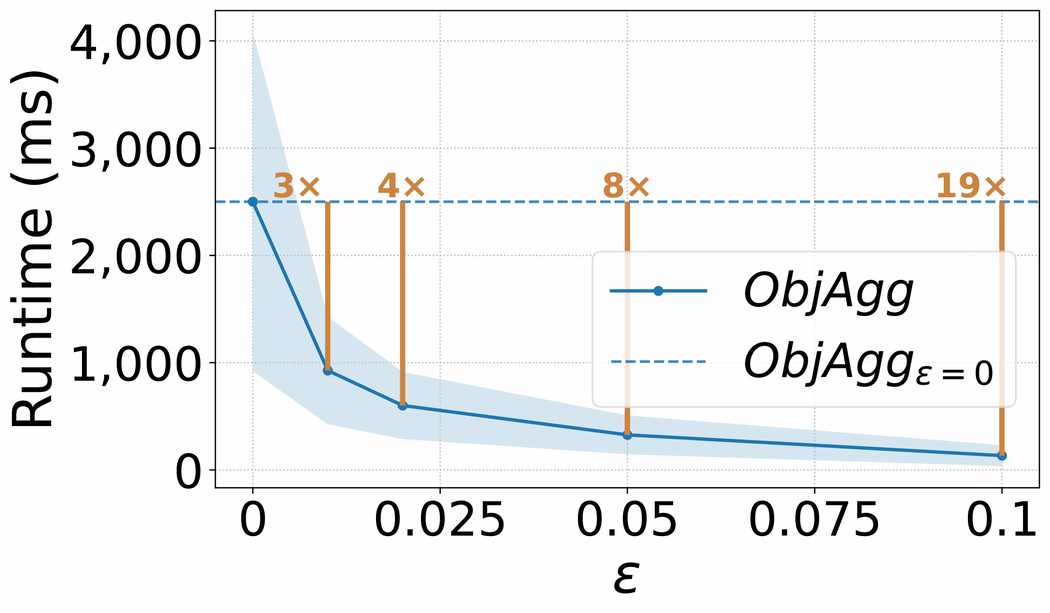}}
        \caption{}
        \label{fig:speedup-runtime_eps}
    \end{subfigure}
    \hfill
    \begin{subfigure}[t]{0.24\textwidth}
        \centering
        \includegraphics[height=0.11\textheight]{\figType{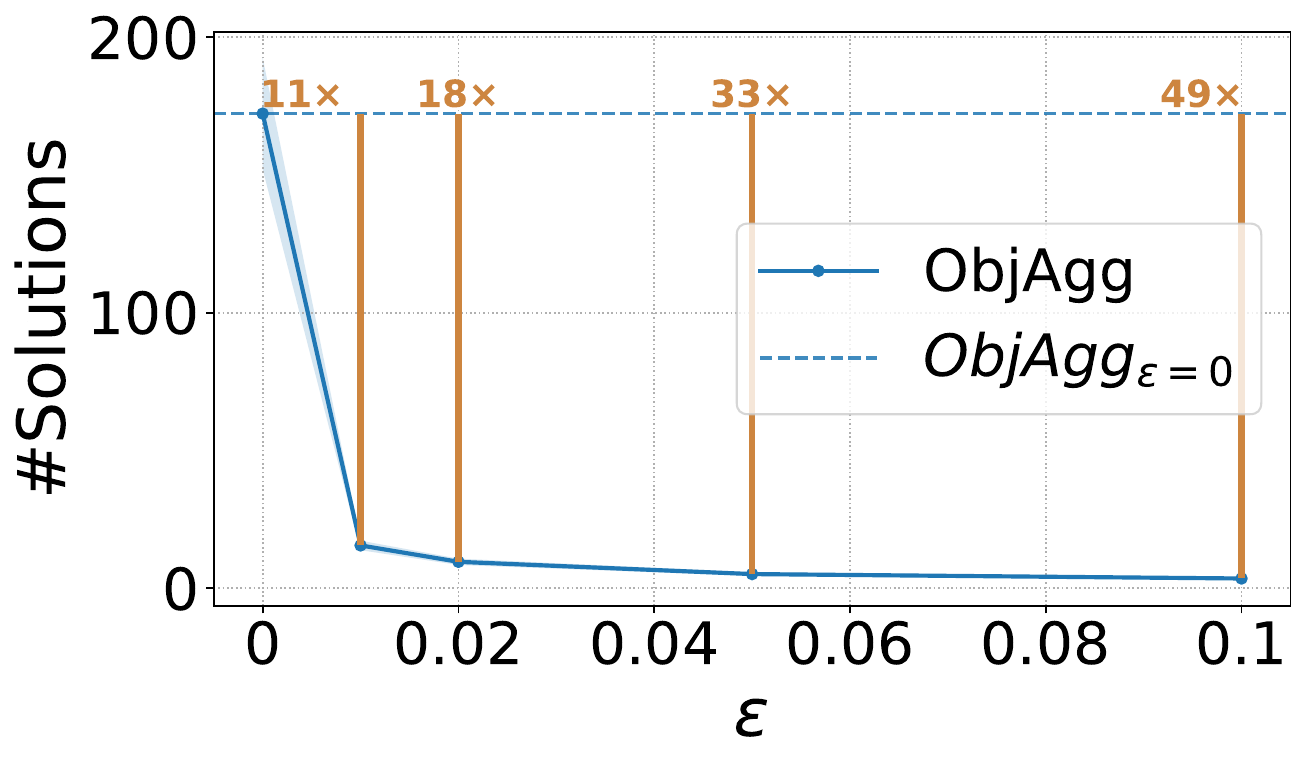}{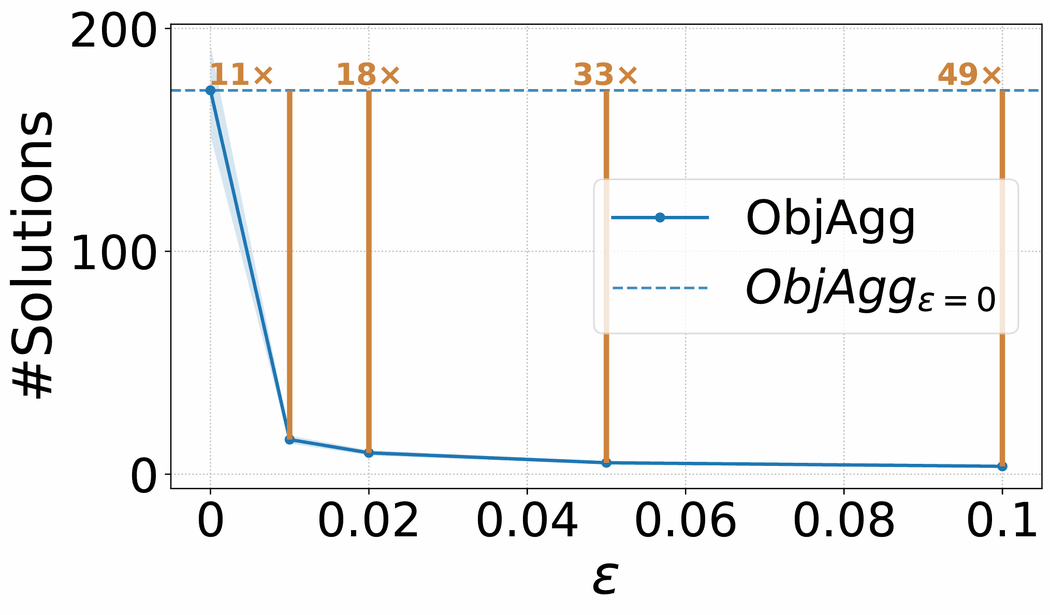}}
        \caption{}
        \label{fig:speedup-solutions}
    \end{subfigure}
    \caption{
    (a)~Point-robot environment together with the roadmap (gray), obstacles with shadows (blue) and one start-goal pair along with all paths in $\Pi^*$.
    (b-d)~Performance as a function of the approximation factor~$\varepsilon$, average speedups indicated.
    (b)~Runtimes of \algname{Baseline} and \algname{ObjAgg} shown in log-scale, speedups in red.
    (c)~Runtimes of \algname{ObjAgg} for changing~$\varepsilon$ and fixed~$\varepsilon=0$, speedups in brown. 
    (d)~Number of solutions of \algname{ObjAgg} for changing~$\varepsilon$ and fixed~$\varepsilon=0$, speedups in brown.
    }
    \vspace{-7pt}
\end{figure*}

\subsection{Route Planning with Road Types}
\label{subsec:route}
In route planning with road types, 
we are given a ground vehicle $\R$,
a road network represented as a graph $G$
with two types of edges corresponding to paved and unpaved roads.
Following state-of-the-art navigation applications such as Waze, we are tasked with planning the shortest route while avoiding long unpaved roads.
Rephrased as an optimization problem, we aim to minimize both the overall path length and the longest partial path on unpaved roads.

In terms of the objective aggregation framework, this problem can be addressed by defining three hidden objectives:
$g$ for the accumulated road length, $g_{\text{con}}$ for the consecutive unpaved road length, and $g_{\text{max}}$ for the maximal consecutive unpaved road length.
The solution objectives in this case are a subset of the hidden objectives: $g$ and $g_{\text{max}}$.
The information regarding an edge $e$ can be encoded by the cost function $\mathbf{c}$, which returns a tuple $(\ell,\texttt{Type})$, where the former is the road length and the latter is equal to $1$ for paved and $0$ for unpaved roads.
Thus, we obtain the following extension and aggregation functions:
%
\begin{equation}
\mathcal{F}_{\rm ext}\big((g,g_{\text{con}},g_{\text{max}}),(\ell,\texttt{Type}) \big) = 
{\color{white}.\hspace{24mm}.}
\end{equation}
\[
\begin{cases}
\big(g+\ell,\; g_{\text{con}}+\ell,\; \max(g_{\text{con}}+\ell, g_{\text{max}})\big) & \quad \text{if } \texttt{Type}=0, \\[6pt]
\big(g+\ell,\; 0,\; g_{\text{max}}\big) & \quad \text{if } \texttt{Type}=1.
\end{cases}
\]

\begin{equation}
\label{eq:agg3}
    \mathcal{F}_{\rm agg} (g,g_{\text{con}},g_{\text{max}}) = \left(g,g_{\text{max}} \right).
\end{equation}

Notice that here~$m=3, d=k=2$, and interestingly~$\mathcal{F}_{\rm ext}$ is only monotonic w.r.t. $g,g_{\text{max}}$ (a paved edge resets $g_{\text{con}}$ to $0$); Nevertheless, monotonicity is still preserved for the solution objectives. In the empirical evaluation (presented in Sec.~\ref{sec:experiments}) we will discuss practical implications of such cases.

\section{Experiments}
\label{sec:experiments}

We empirically demonstrate the efficacy of using objective aggregation, specifically in the context of computing approximations of $\Pi^*$ when compared to unmodified MOS algorithms.
We evaluate our approach on four different problems 
which simulate 
planning under obstacle uncertainty (Sec.~\ref{subsec:shadows})
and 
route planning with road types (Sec.~\ref{subsec:route}).
For the former we consider 
(i)~a point robot moving in the plane,
(ii)~a $5$-link planar manipulator 
and
(iii)~The Continuum Reconfigurable Incisionless Surgical Parallel (CRISP)~\cite{mahoney2016reconfigurable} robot
moving in the inner surface of a pleural cavity.

We use \algname{NAMOA-dr} as our MOS  algorithm \algname{ALG}.\footnote{Importantly, we also evaluated objective aggregation using the \apex algorithms. Results ommitted in favor of simplicity of the presentation. The implementation will still be available in our open-source repository.}
For simplicity, we use 
\algname{Baseline} and \algname{ObjAgg} to refer to the straw man algorithm described in Sec.~\ref{subsec:gen_mos_algorithms} 
(i.e., to 
$(m,m)$-$\algname{ALG}_\varepsilon$)
and to our proposed objective aggregation framework
(i.e., to 
$(k,m)$-$\algname{ALG}_\varepsilon$ for $k<m$).
Algorithms were implemented in C\raise.08ex\hbox{\tt ++}\xspace, and experiments were conducted on Dell Latitude 5411 with 16 GB RAM, Intel i7-10850H (2.70 GHz), and Ubuntu 18.04.5 LTS. Code, experimental setup and environments will be made publicly available upon paper acceptance.

In all experiments, we use an approximation factor of~$\varepsilon\geq 0$ and applied the graph-distance heuristic (i.e., shortest path distances) w.r.t. the path-length objective (or Euclidean distance in C-space in the manipulator environments).


\paragraph{Navigation under OU -- Point Robot}

\begin{figure}[t]
    \centering
    \begin{subfigure}[t]{0.45\columnwidth}
        \centering
        \includegraphics[width=0.9\linewidth]{\figType{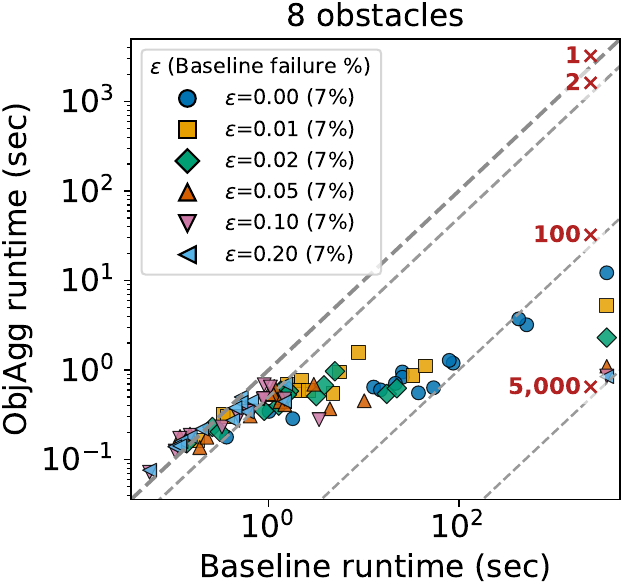}{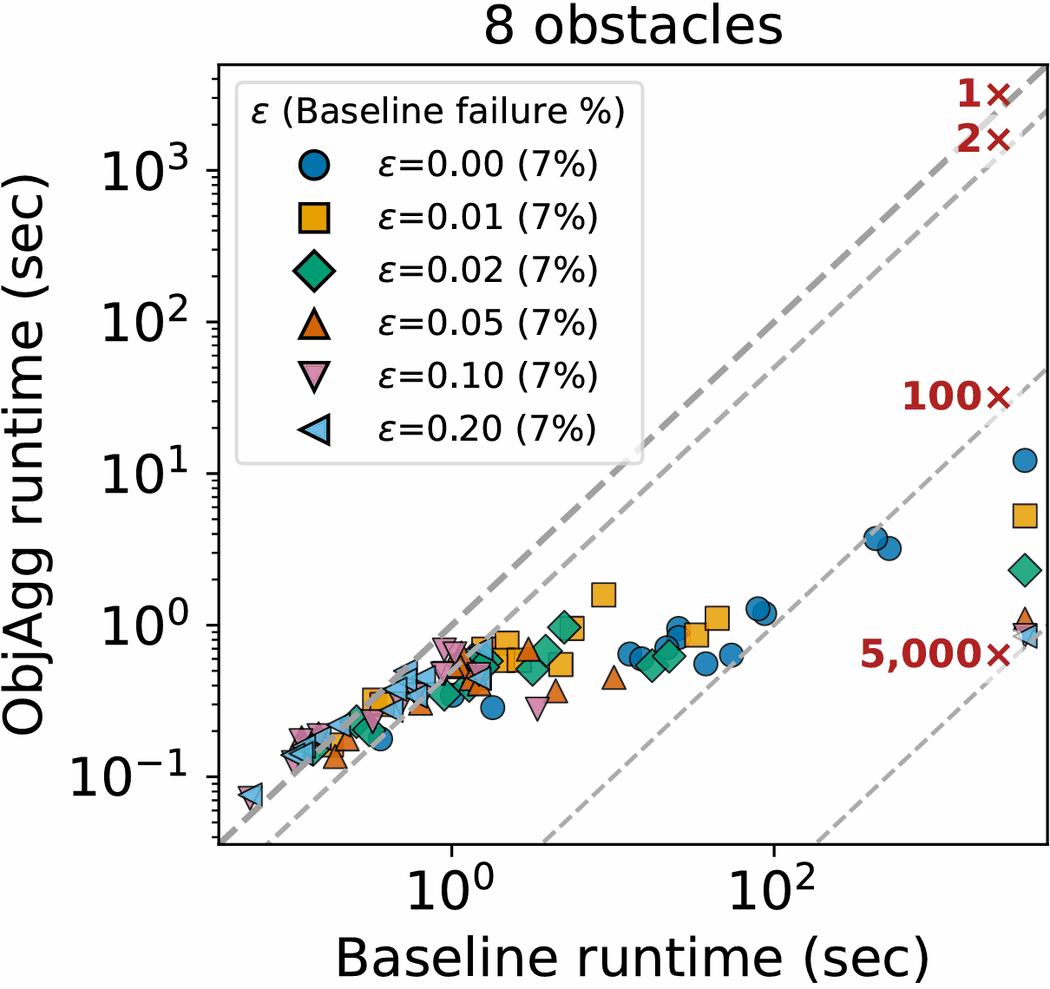}}
        \caption{}
        \label{fig:boa-vs-baseline-8}
    \end{subfigure}
     \hfill
    \begin{subfigure}[t]{0.45\columnwidth}
        \centering
        \includegraphics[width=0.9\linewidth]{\figType{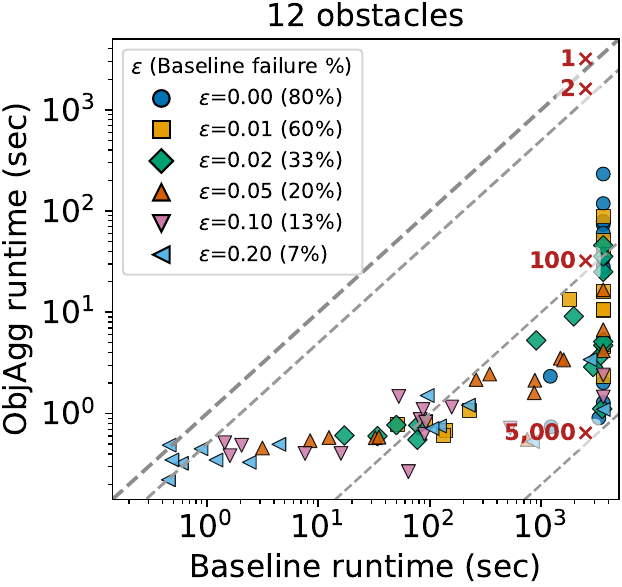}{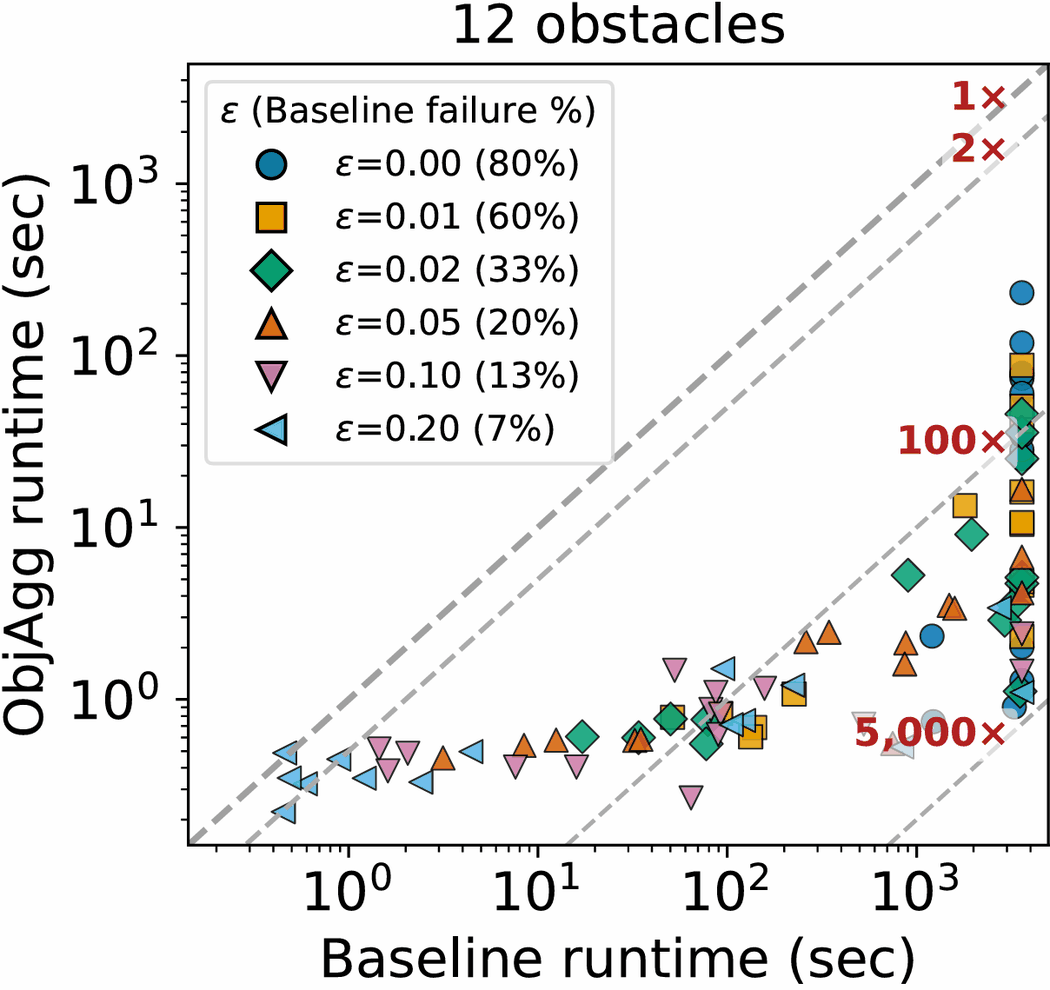}}
        \caption{}
        \label{fig:boa-vs-baseline-12}
    \end{subfigure}

    \caption{
    Point robot log–log scatter plots of runtime (seconds) 
    for \algname{Baseline} (horizontal axis) and \algname{ObjAgg} (vertical axis), 
    for environments with $8,12$ obstacles. 
    Each marker corresponds to a scenario at a given $\varepsilon_{\text{length}}$ value, with consistent color and marker mapping across panels. 
    The diagonal $1\times$ line indicates equal runtime, and dashed guide lines show speedup factors of $2\times$, $5\times$, $100\times$, and $5000\times$. 
    Legends list $\varepsilon$ values for the path length objective together with the percentage of \algname{Baseline} runs that reached the time limit~$T_{\max}$  (failure rate). 
    }
    \label{fig:boa-vs-baseline}
    \vspace{-4mm}
\end{figure}

\begin{figure*}[h]
  \centering
  \hspace*{\fill}
     \begin{minipage}[t]{0.317\textwidth}
    \centering
    \vspace{0pt}
    \subcaptionbox{}{%
      \includegraphics[width=0.9\linewidth]{\figType{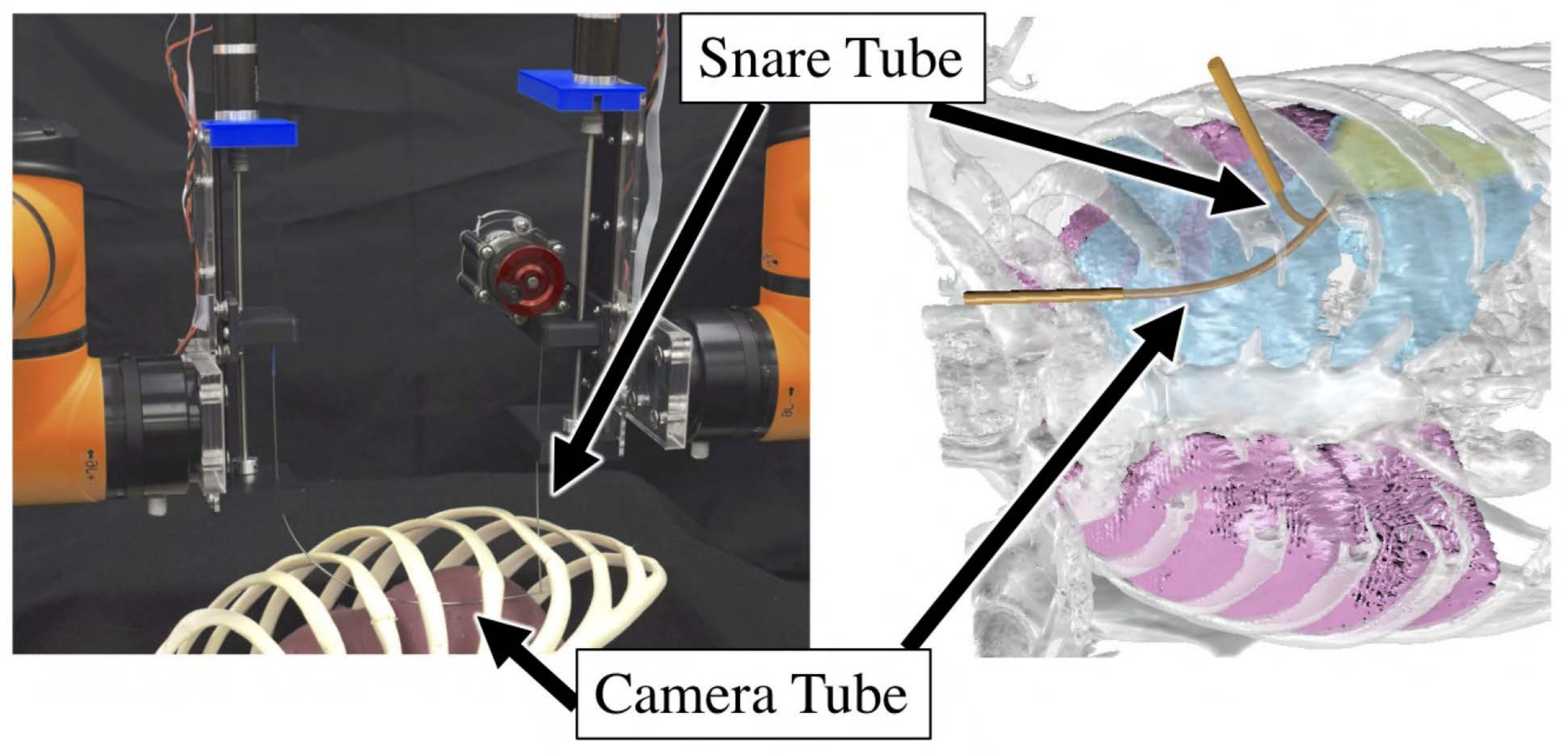}{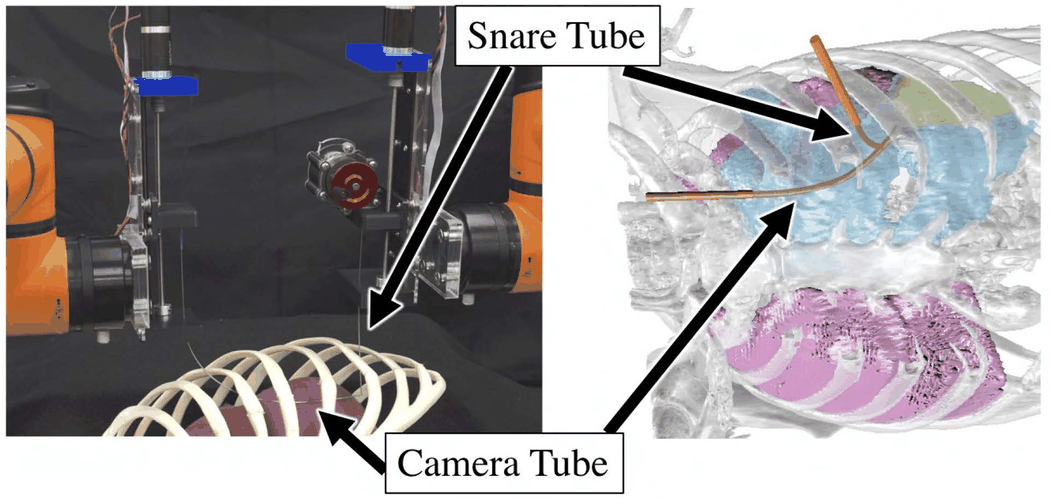}}%
    }\\[0.5em]
    \begin{subfigure}[t]{0.49\linewidth}
      \centering
      \includegraphics[width=\linewidth]{\figType{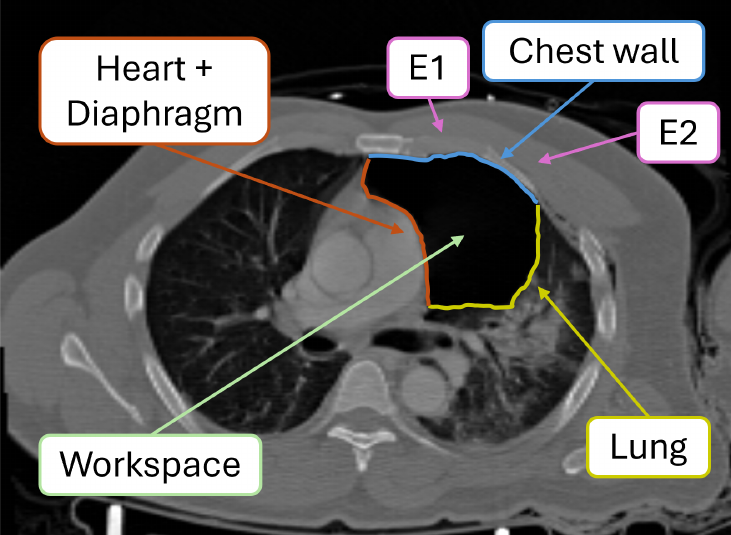}{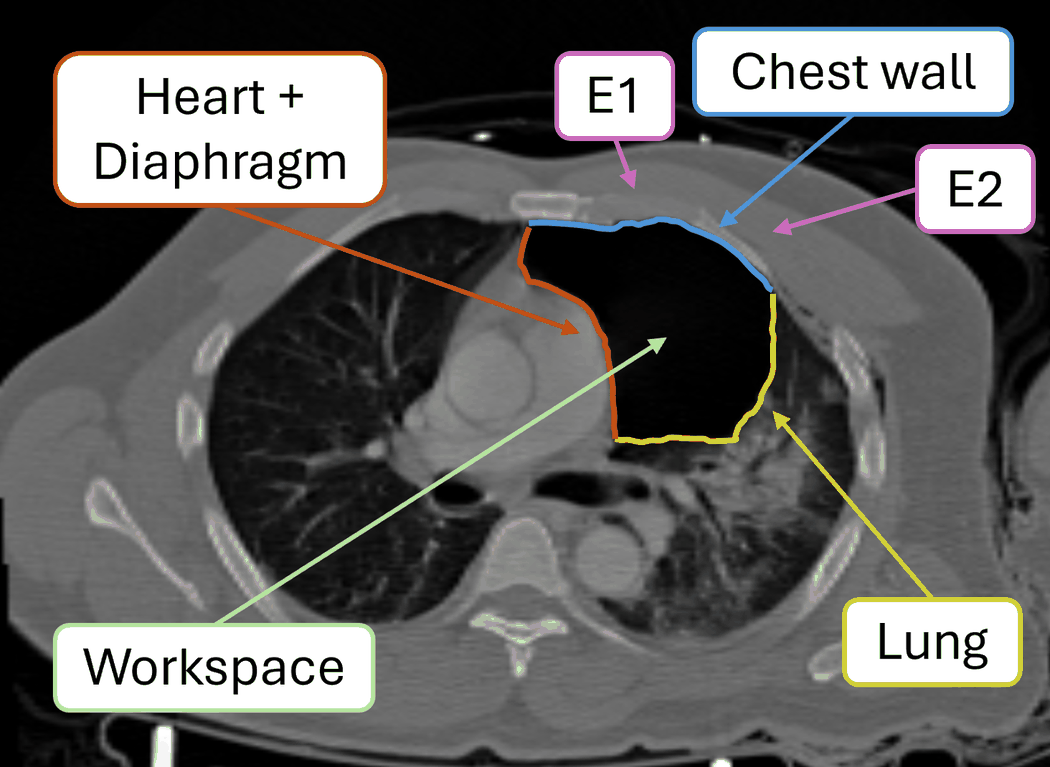}}
      \caption{}\label{fig:crisp-e}
    \end{subfigure}
    \begin{subfigure}[t]{0.49\linewidth}
      \centering
      \includegraphics[width=\linewidth]{\figType{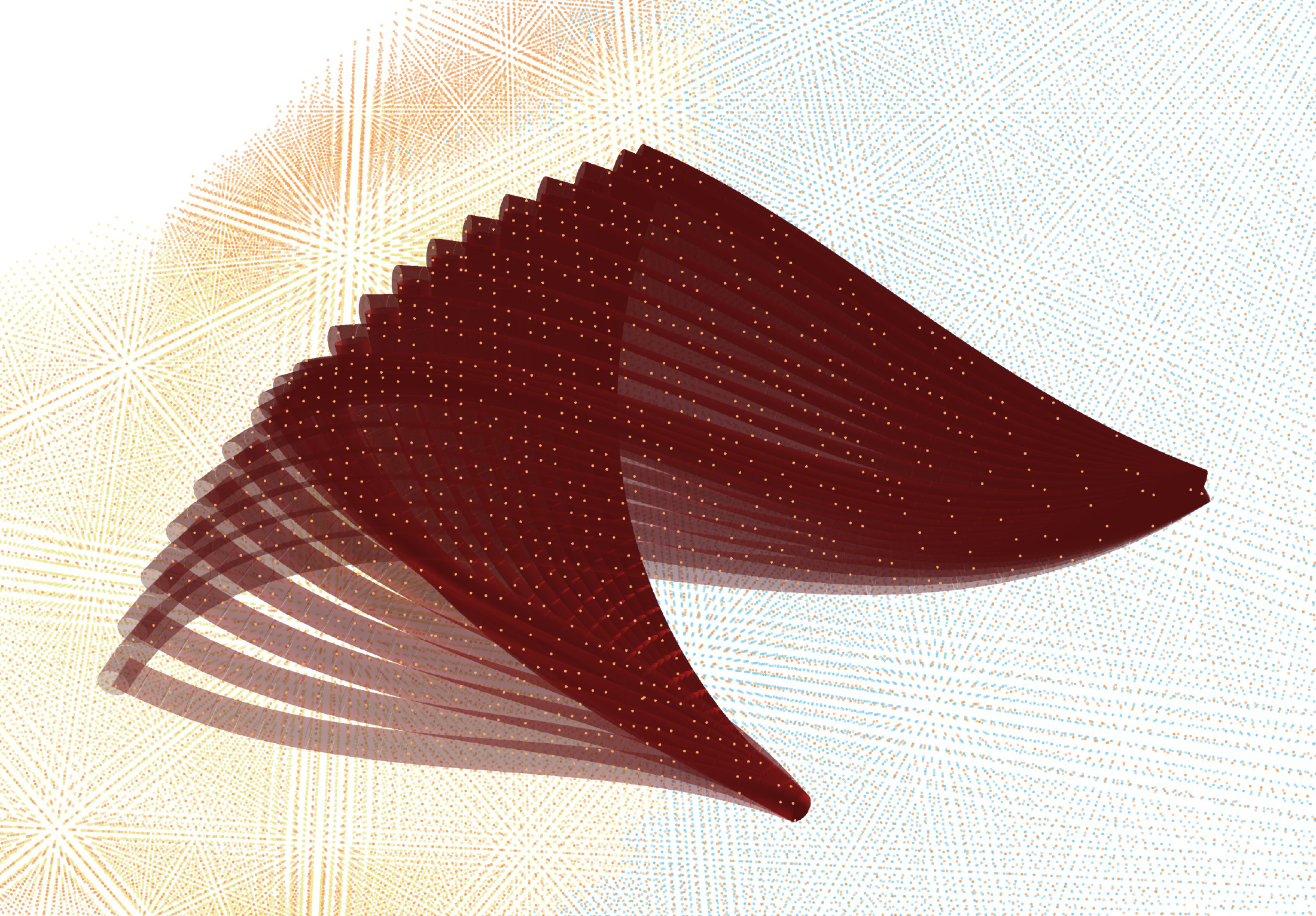}{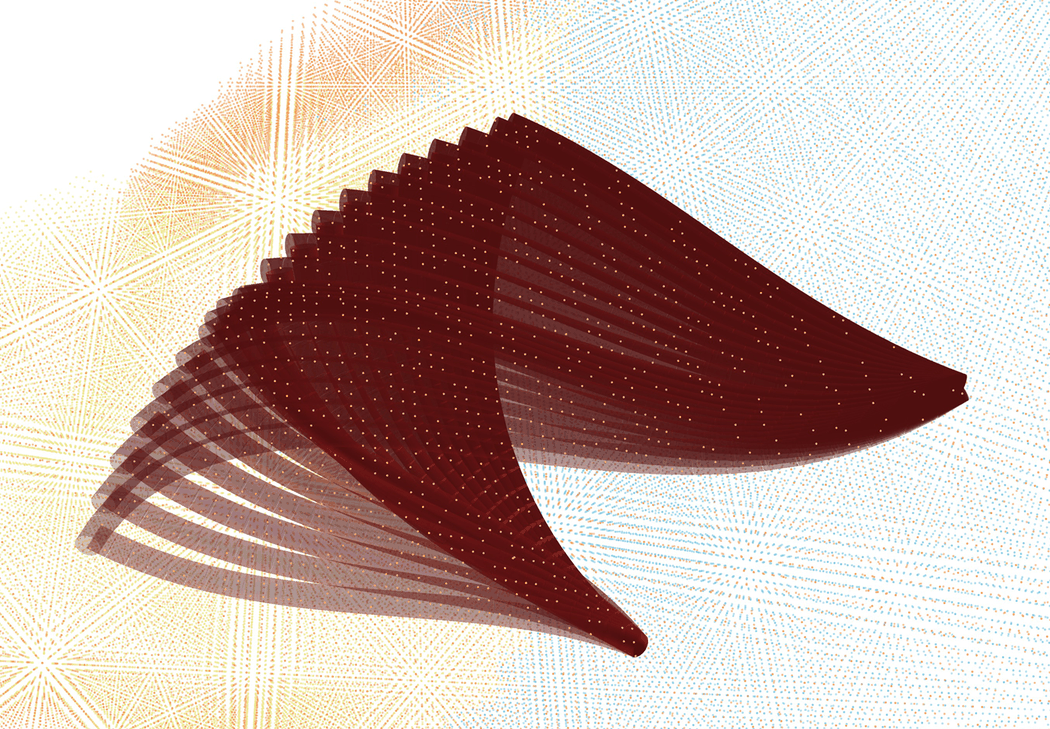}} 
      \caption{}\label{fig:crisp-f}
    \end{subfigure}
  \end{minipage}
  \hfill
\begin{minipage}[t]{0.1865\textwidth}
    \centering
    \vspace{0pt}
    \subcaptionbox{}{%
      \includegraphics[width=\linewidth]{\figType{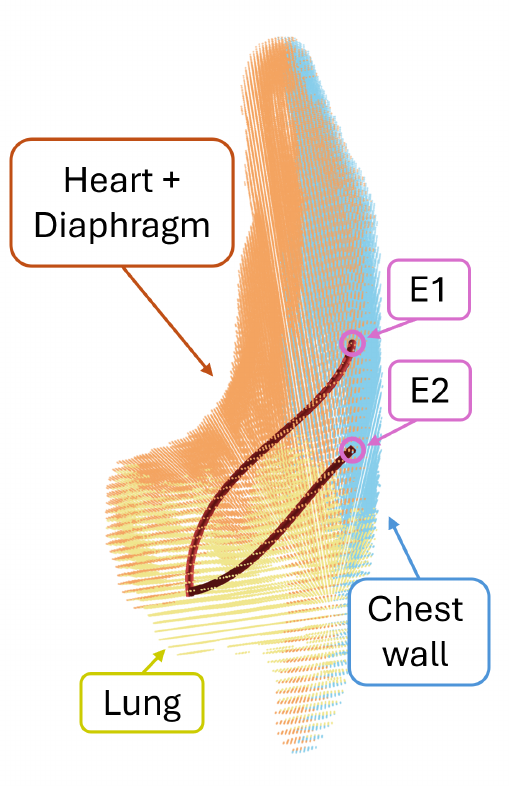}{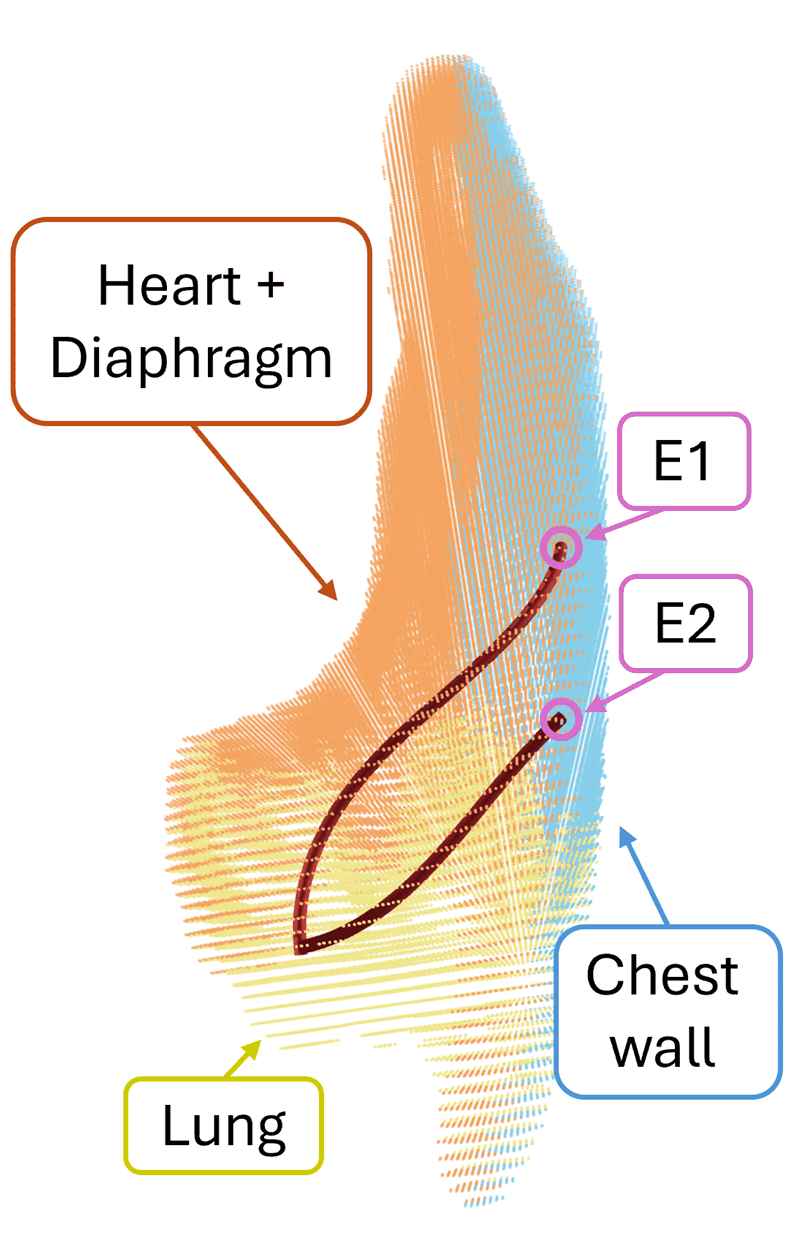}}%
    }
  \end{minipage}
    \hfill
  \begin{minipage}[t]{0.194\textwidth}
    \centering
    \vspace{0pt}
    \subcaptionbox{}{%
      \includegraphics[width=\linewidth]{\figType{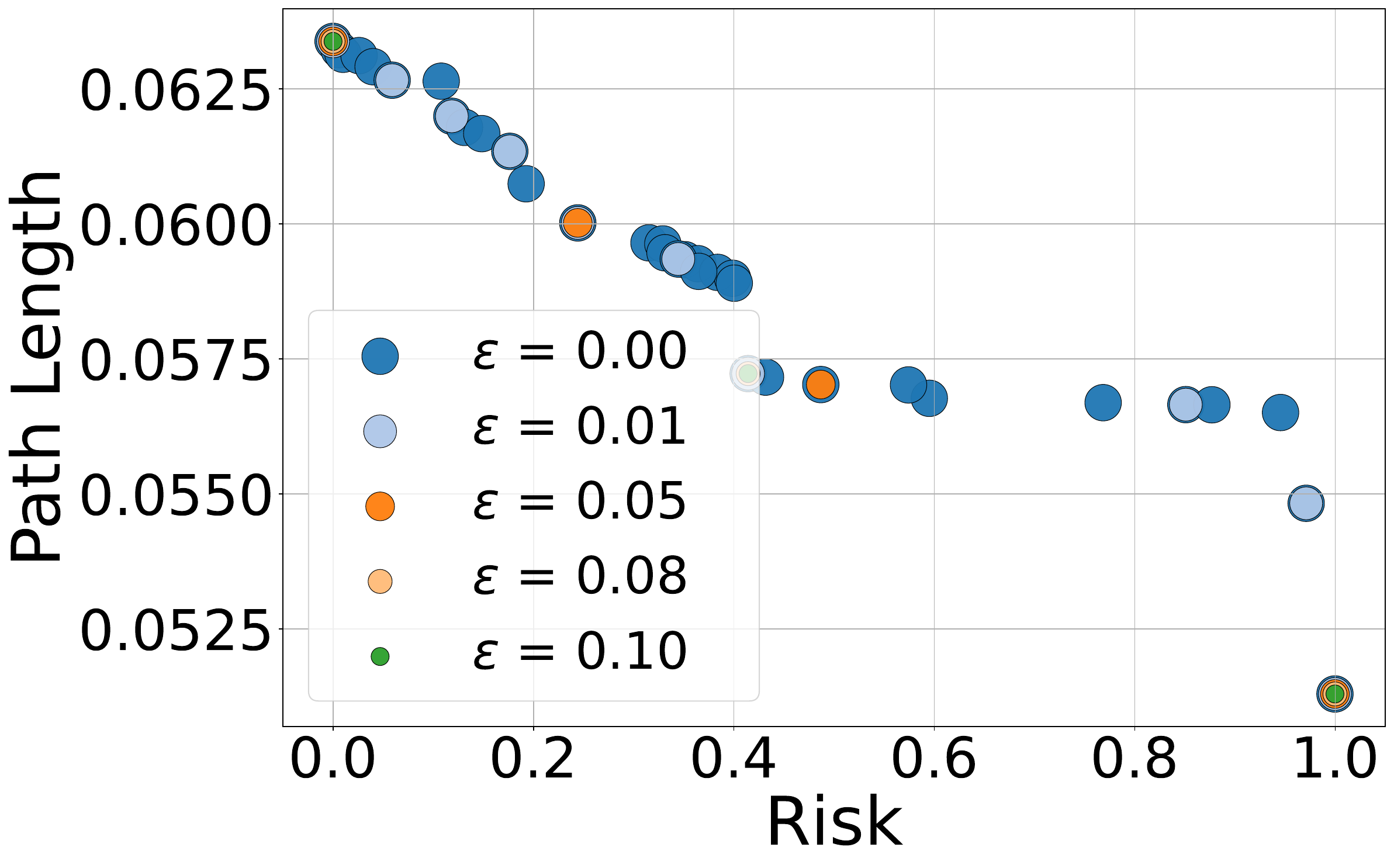}{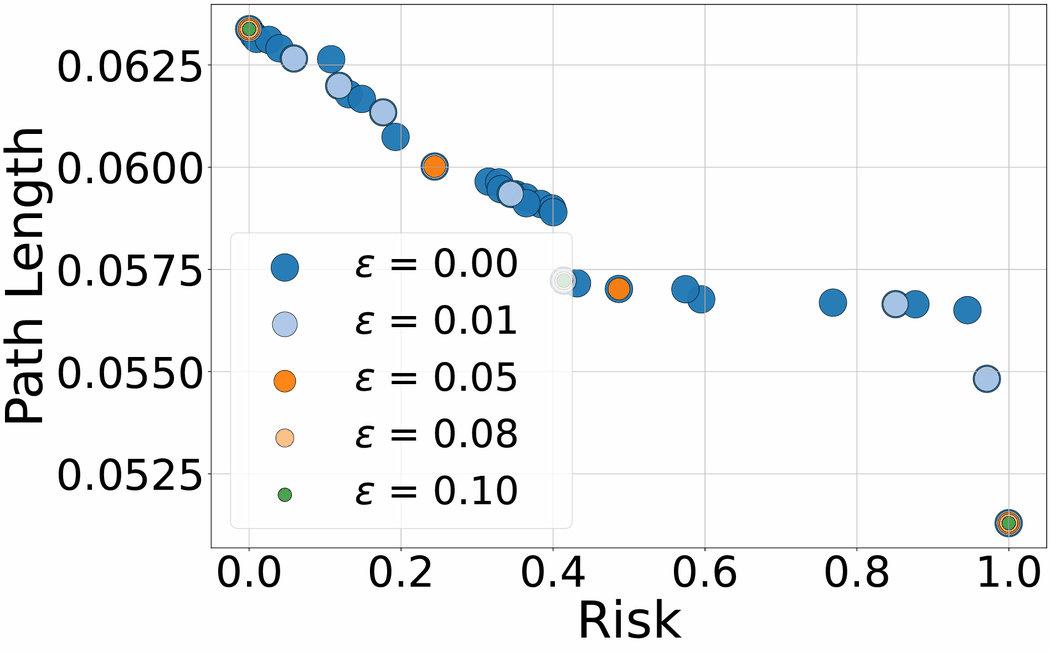}}%
    }\\[1em]
    \subcaptionbox{}{%
      \includegraphics[width=\linewidth]{\figType{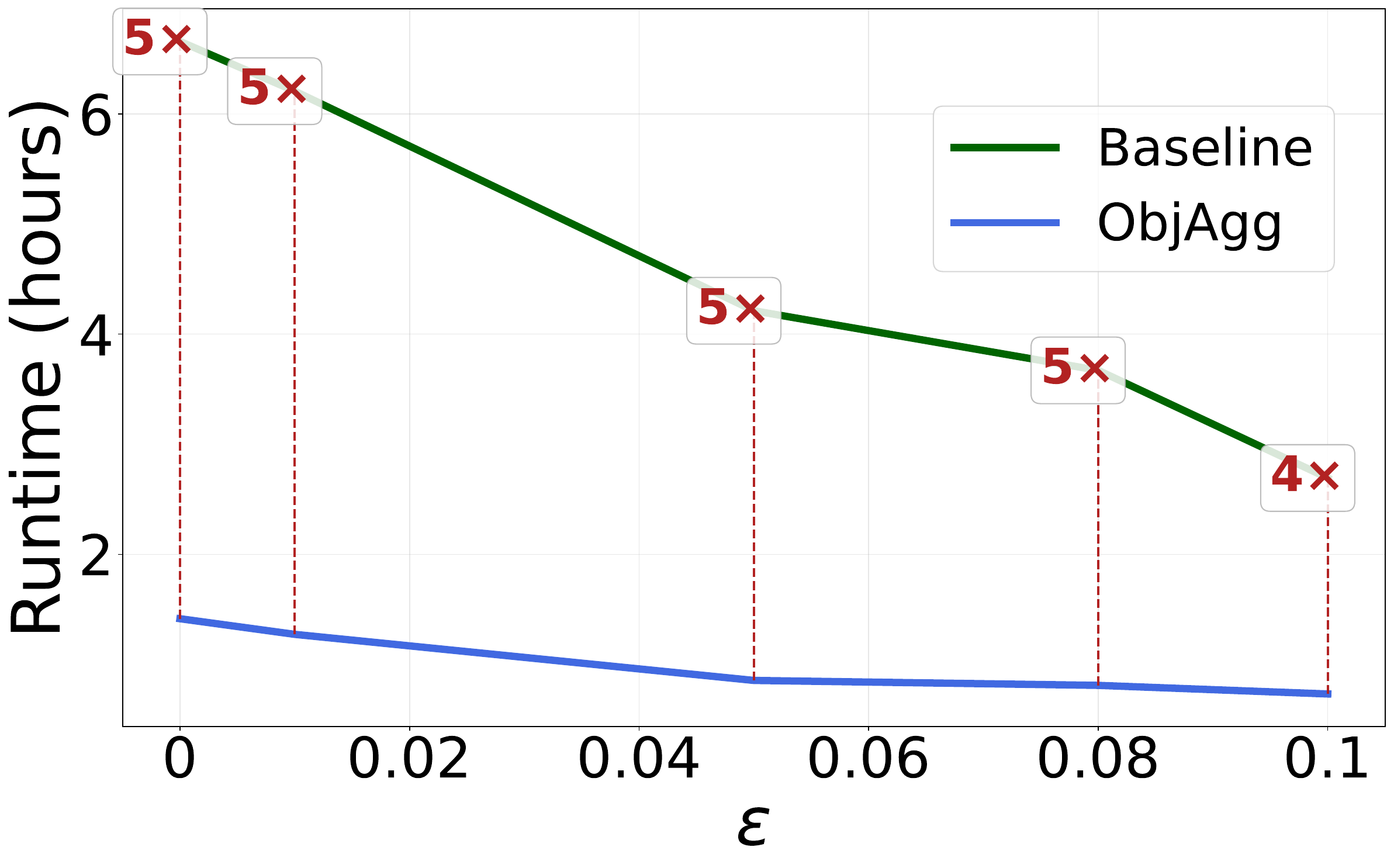}{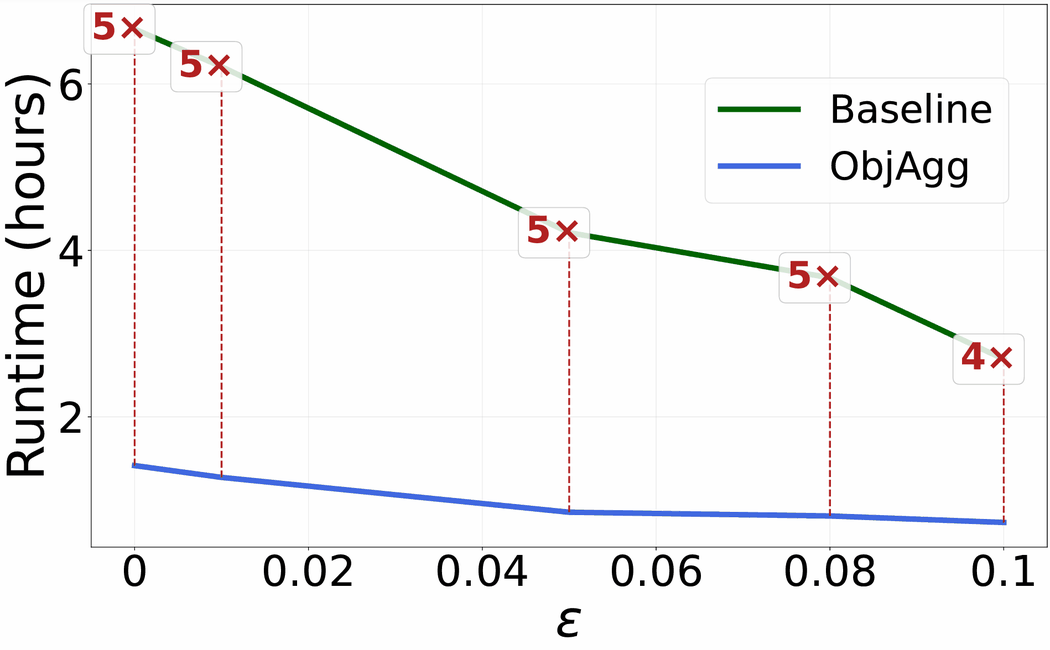}}%
    }
  \end{minipage}
  \hspace*{\fill}
\caption{CRISP robot operating in a lung: (a) Segmented anatomical surfaces with entry points E1 and E2. (b)~CT slice visualization. (c) Example of the  motion of the CRISP's tubes. (d) CRISP robotic platform (figure adapted from~\cite{FuKSA23}). (e)~Approximate Pareto-optimal frontiers for different $\varepsilon$ values. (f) Runtime of \algname{ObjAgg} and \algname{Baseline} as a function of~$\varepsilon$.}
  \label{fig:crisp-all}
  \vspace{-6pt}
\end{figure*}

We consider a point robot navigating a 2D environment cluttered with rectangular obstacles. 
Uncertainty is modeled using Gaussian distributions over the obstacles’ widths and heights.
For each obstacle, we generated 30 equally-spaced shadows, with each shadow increasing in size by a constant amount. The risk probability of each shadow is evaluated using the corresponding Gaussian distribution. In a preprocessing phase, we generated a roadmap using the PRM algorithm~\cite{prm} and sampled $20$ pairs of vertices whose distance exceeds a minimal threshold, of $0.6\cdot d_{\rm env}$, where $d_{\rm env}$ is the distance between opposite corners in the environment. 
For each query consisting of a start-goal pair and a value of $\varepsilon$, we set a timeout of $T_{\max} = 3600$ seconds. 
Visualization of the environment and a subset of the POF are provided in Fig.~\ref{Fig_point_env}.
Here, we have $m=13$ hidden objectives, reflecting 12 obstacles and the path length.
The experimental results, summarized in Fig.~\ref{fig:speedup-runtime_base}-\ref{fig:speedup-solutions}, demonstrate that \algname{ObjAgg} significantly outperforms \algname{Baseline} in terms of runtime, a trend which further intensifies with larger $\varepsilon$ values.
Additionally, when~$m$ (number of hidden objectives) increases, the advantage of our aggregation-based method becomes more pronounced, achieving speedups of up to $5000\times$ (see Fig.~\ref{fig:boa-vs-baseline}).

\paragraph{Manipulation under OU -- Planar Manipulator}
We consider a 5-link manipulator operating in 2D. The setup is motivated by the task of grasping a milk carton on a cluttered refrigerator shelf with rectangular obstacles modeled as boxes (see Fig.~\ref{fig:planar_env}). The environment contains 12 obstacles, corresponding to 12 hidden objectives. Uncertainty is represented as Gaussian noise over the obstacle locations. For each
of the~12 obstacles, we generated 30 equally-spaced shadows, with each shadow increasing in size by a constant amount. The risk of each shadow is evaluated using the corresponding Gaussian distribution. In a preprocessing phase, a roadmap is generated using the PRM algorithm. We demonstrate the evaluation on a single start–goal configuration task (see Fig.~\ref{fig:planar_env}). For this instance, we set a timeout of~$T_{\max}=1500$~sec. Our algorithm \algname{ObjAgg} required $152$~sec when $\varepsilon=0$ and $80$~sec when $\varepsilon=0.2$, while \algname{Baseline} reached the timeout $T_{\max}$ for all values of $\varepsilon$, reflecting speedups of at least $10-19\times$.

\begin{figure}[t]
  \centering
  \begin{subfigure}[t]{0.44\columnwidth}
    \centering
    \includegraphics[width=\linewidth]{\figType{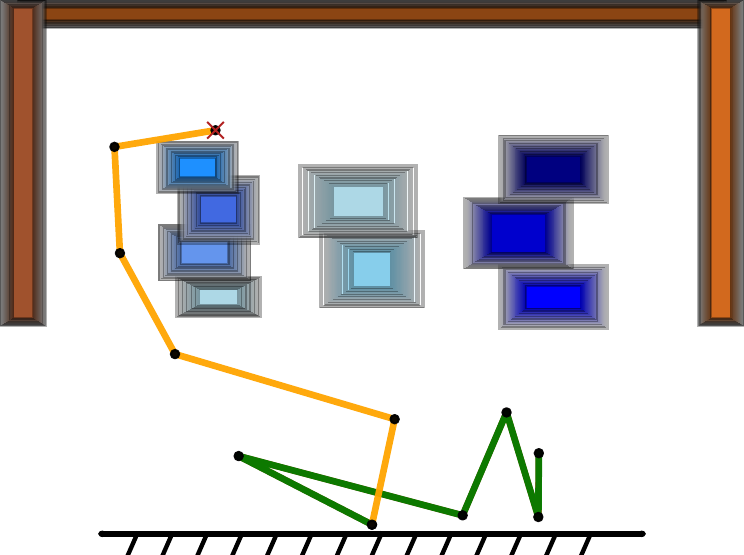}{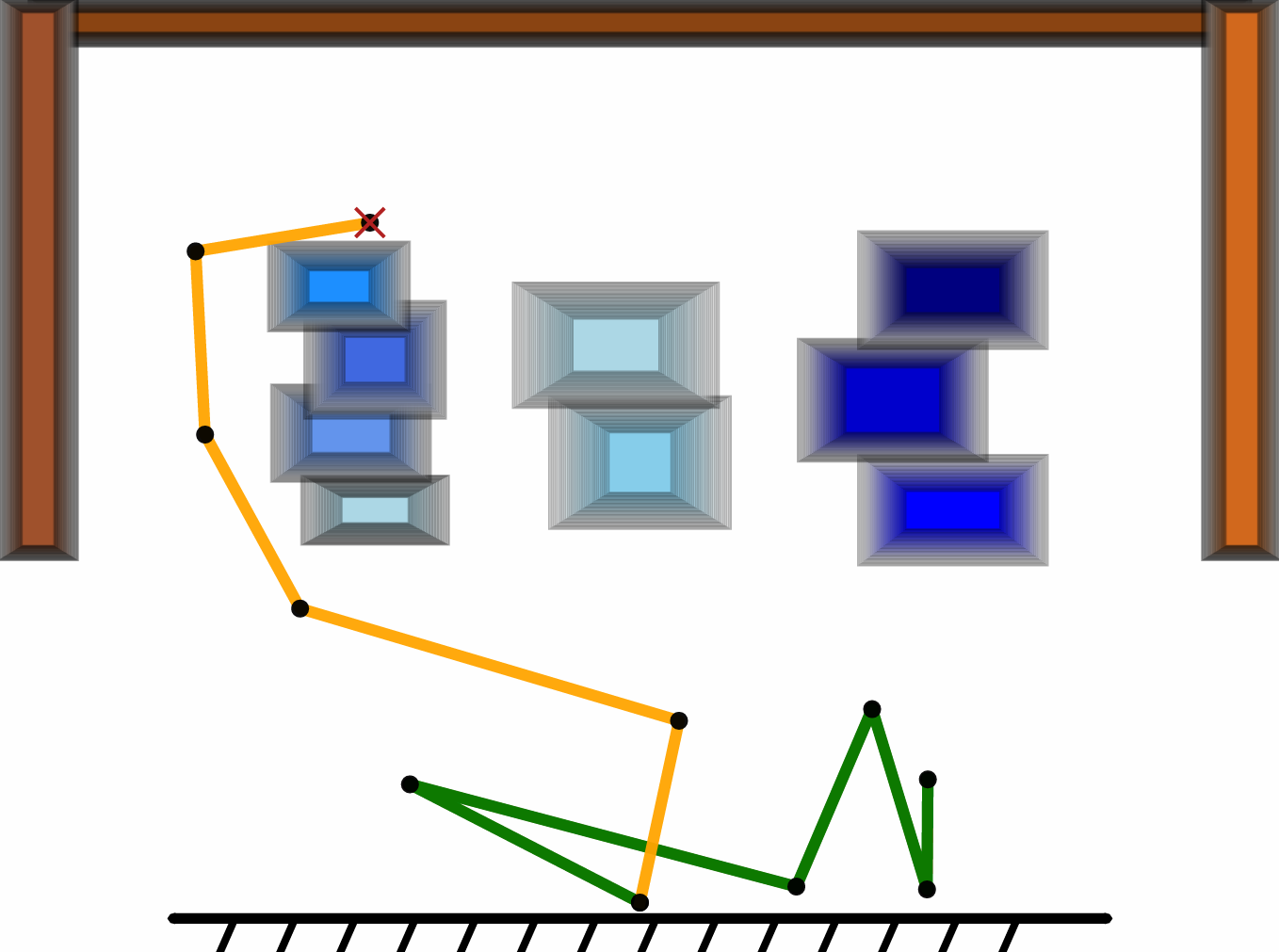}}
    \caption{}
    \label{fig:planar_env}
  \end{subfigure}\hfill
  \begin{subfigure}[t]{0.52\columnwidth}
    \centering
    \includegraphics[width=\linewidth]{\figType{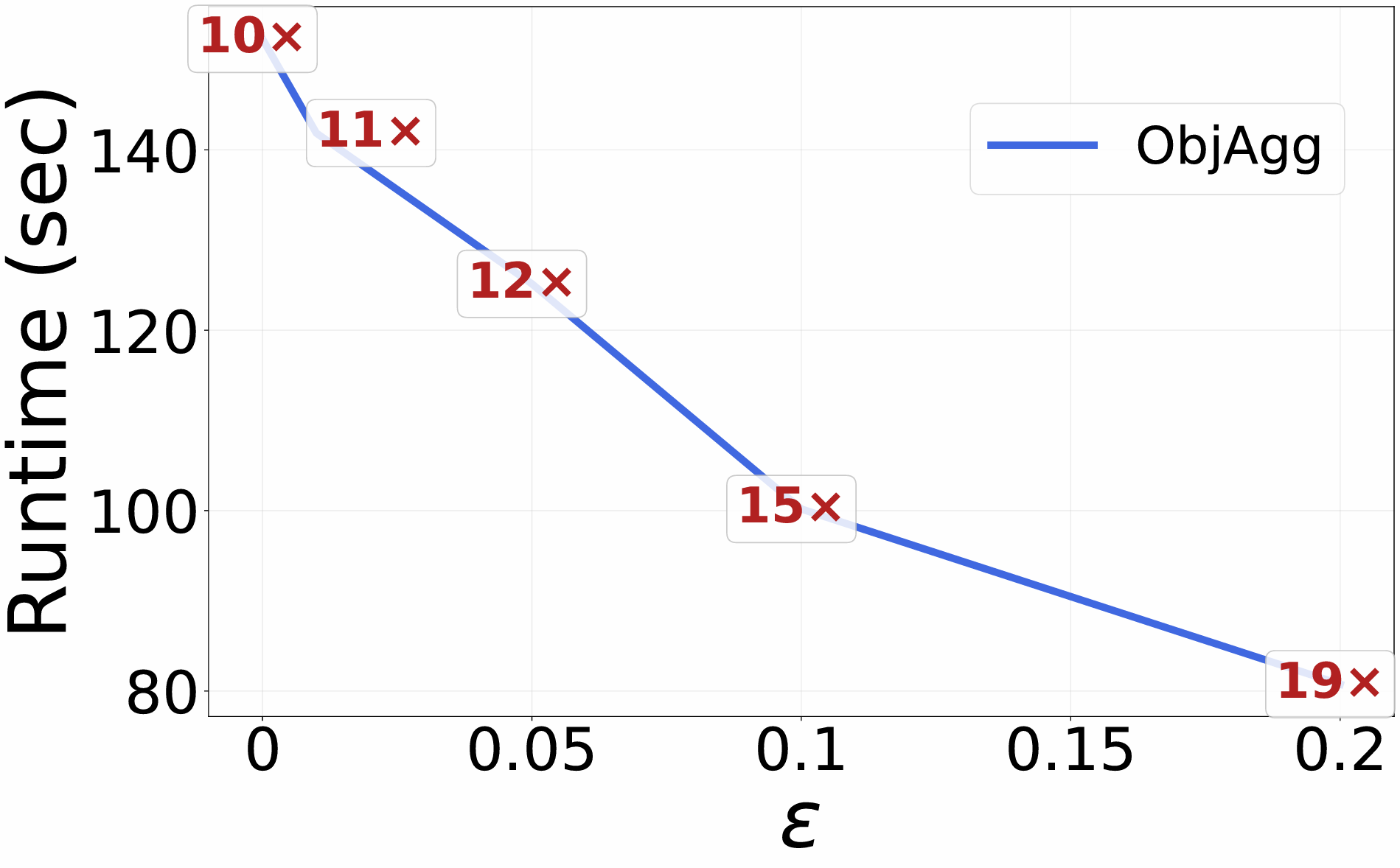}{img/planar/planar_speedup.png}}
    \caption{}
  \end{subfigure}
 \caption{(a) 5-link manipulator operating in a refrigerator shelf cluttered with obstacles. Initial configuration in green, terminal configuration in orange. (b) Runtime of \algname{ObjAgg} as a function of increasing $\varepsilon$ and speedups compared to \algname{Baseline}.}
  \label{fig:planar-all}
  \vspace{-10pt}
\end{figure}

\paragraph{Manipulation under OU -- CRISP robot}
Pleural effusion is a medical condition in which excess fluid accumulates in the pleural cavity, the space between the lungs and the chest wall. To evaluate our method in a clinically motivated setting, we used an anatomical pleural effusion environment derived from a Computed Tomography (CT) scan of a real patient diagnosed with the condition. The internal surface of the cavity is represented as a point cloud, where each point is treated as an obstacle assumed to exist with full certainty. A shadow volume includes all points within a specified distance from an obstacle, and the collision probability with an obstacle is estimated by identifying its innermost shadow, i.e., the smallest distance between the robot and the corresponding obstacle points. Under this uncertainty model, we account for the possibility that the free space available to the robot may shrink, reflecting the uncertainty in pleural effusion anatomy between CT scan acquisition and the medical procedure. For analysis, we segmented the internal surface into three distinct obstacle regions: the lung, the heart and diaphragm (as one obstacle), and the chest wall (excluding the tube entry points). Therefore, in these settings the number of hidden objectives is $m=4$.
The environment and CRISP robot are visualized in Fig.~\ref{fig:crisp-all}.
As shown in Fig.~\ref{fig:crisp-f}, the total runtime in the CRISP experiment is significantly higher than in the point robot and manipulator scenarios. This is primarily due to the high computational cost of evaluating the CRISP robot's shape, making each node expansion much slower. As a representative instance, we consider a task in which the end-effector must move between two distant locations inside the pleural space. In this case, increasing the approximation factor reduces both the number of solutions and the overall runtime. Relative to \algname{Baseline}, we obtain a speedup of roughly~$5\times$.

\paragraph{Route planning with road types}
We used the OpenStreetMap road network of the Boulder Foothills, Colorado, and partitioned the edges into paved and unpaved. From this graph, we sampled 20 source--target pairs whose Euclidean distance is at least $25\,\mathrm{km}$, yielding challenging instances with comparable POFs (45$+$ solutions). Here, the number of hidden objectives is $m=3$. 
As indicated in Sec.~\ref{subsec:route},~$g_{\mathrm{con}}$ is non-monotone and is used by \algname{Baseline} as a solution objective. Thus, the lexicographic order used in \textsc{Open} matters, as extending nodes by a non-monotonically increasing order may result in the addition of non-optimal solutions to \textsc{Sols}, which implies longer runtimes.
Let $\mathrm{C},\mathrm{M},\mathrm{L}$ stand for hidden objectives $g_{\mathrm{con}},g_{\max},g_\ell$, respectively.
We evaluate three orders: $\mathrm{CLM}$, $\mathrm{MLC}$, and $\mathrm{MCL}$. As shown in Fig.~\ref{fig:runtime-vs-epslength}, \algname{ObjAgg} achieves average speedups of $2\times$, $10\times$ and~$10\times$ vs. hidden objective orderings  MLC, CLM and MCL, respectively.
This provides an insight: \algname{Baseline} is highly sensitive to the lexicographic order used when non-monotone objectives exist, whereas \algname{ObjAgg} is not.

\begin{figure}[t]
  \centering
  \begin{subfigure}[h]{.55\columnwidth}
    \centering
    \includegraphics[width=\linewidth]{\figType{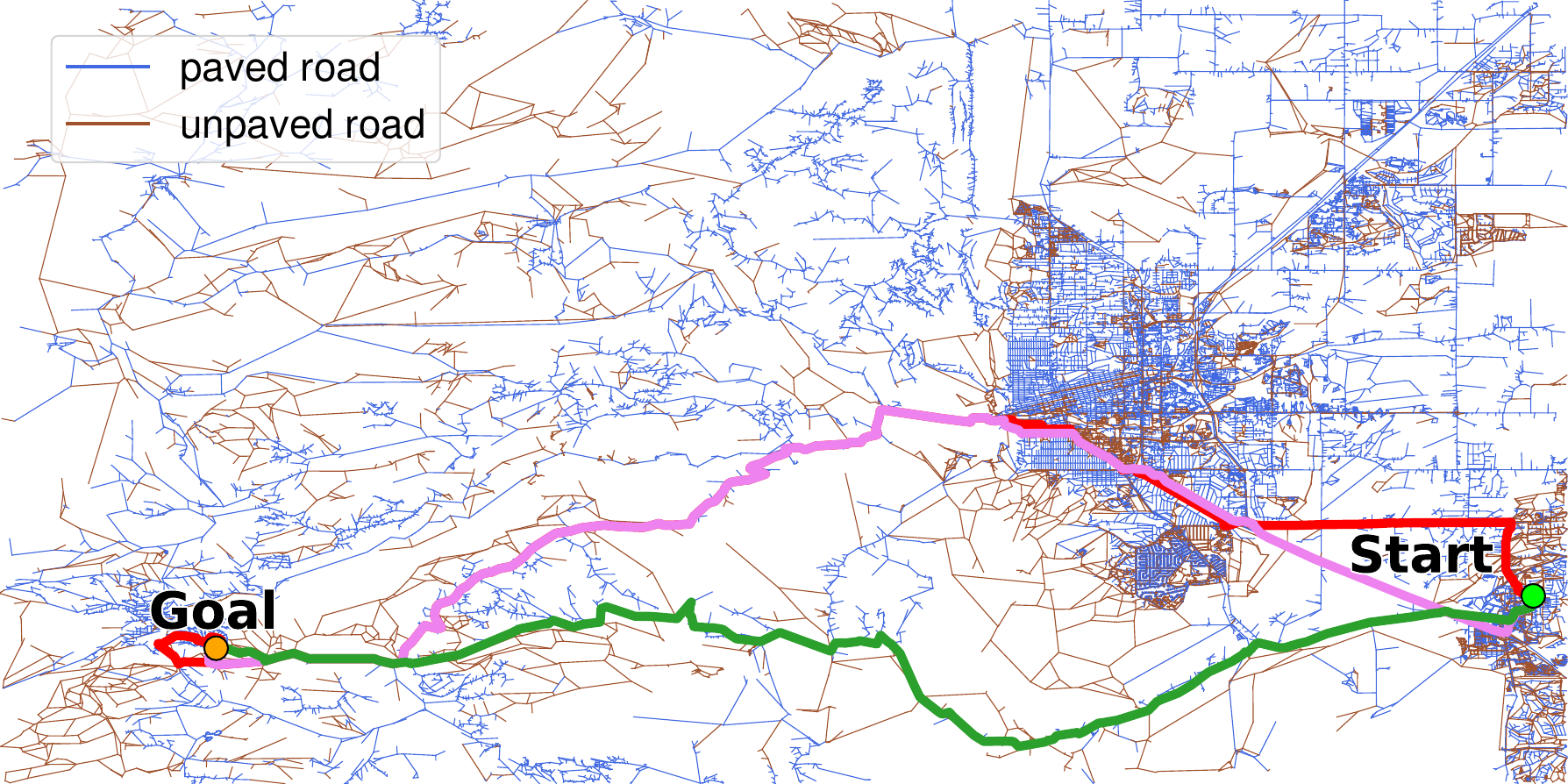}{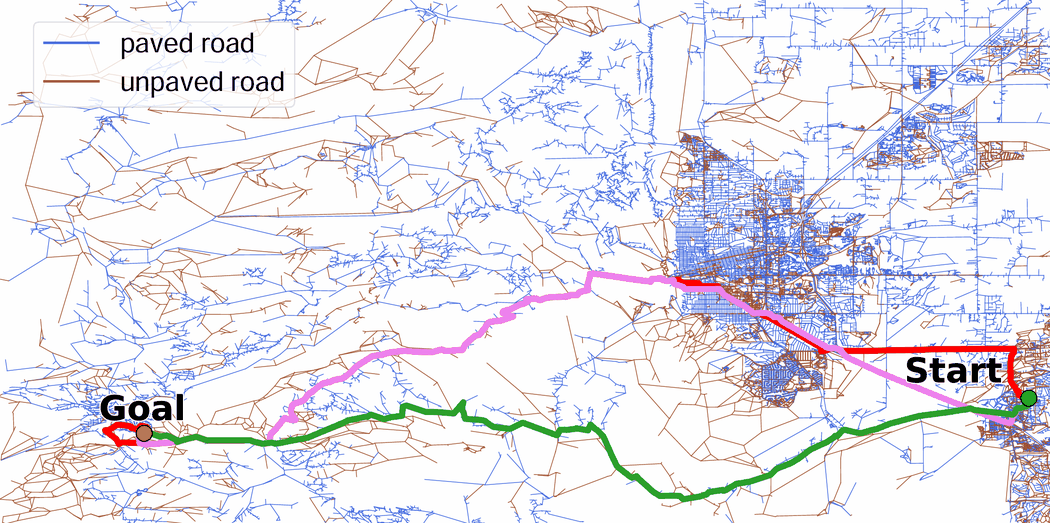}}
    \caption{}\label{fig:boulder-routes}
  \end{subfigure}\hfill
  \begin{subfigure}[h]{.43\columnwidth}
    \centering
    \includegraphics[width=\linewidth]{\figType{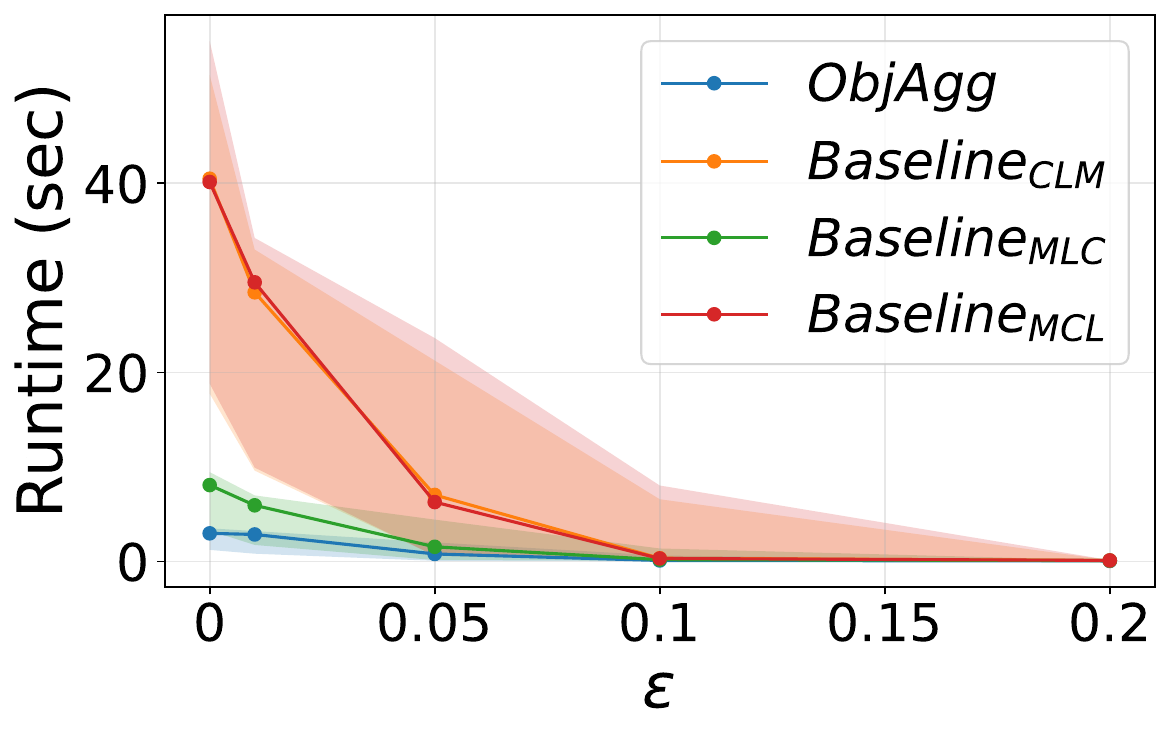}{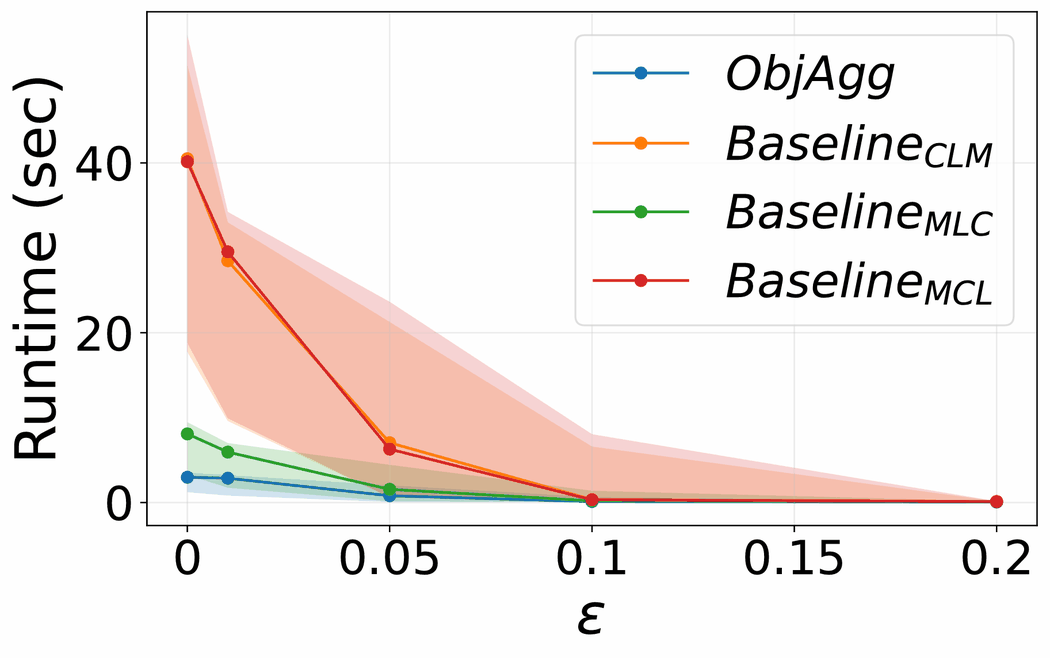}}
    \caption{}\label{fig:runtime-vs-epslength}
  \end{subfigure}
  \caption{
  (a) Road network with road types: paved (blue) and unpaved (brown) with three different paths (green, red, and pink) forming an approximate POF from start to goal.
  (b) Median runtime as a function of $\varepsilon$, shaded regions indicate the inter-quartile range (25th--75th percentiles). 
  }
  \label{fig:pareto_crisp}
  \vspace{-5mm}
\end{figure}



\section{Conclusion and Future Work}
\label{sec:conclusions}
The objective aggregation framework introduced in this paper  offers a unifying abstraction that is both theoretically clean and practically impactful. It allows to naturally model common real-world robotic problems that cannot be efficiently solved using existing MOS machinery.
However, despite appearing simple, there are interesting questions that arise:
Are there classes of aggregation functions that make the MOS problem provably easier or harder?
Are there principled methods for designing admissible and informative heuristics under non-trivial aggregation?
Can the framework be extended to richer problem classes?
We believe that this paper opens many exciting avenues for future work.

\bibliographystyle{plain}
\bibliography{ref}

 \newpage
 \section{Supplementary Material}

We restate Thm.~1 and provide a proof for it.
\begin{theorem*}
    Let \algname{ALG} be some MOS algorithm and consider a MOS problem with $k$ and $m$ solution and hidden objectives for some~$k<m$, respectively, and with 
    monotonically non-decreasing path extension and objective-aggregation functions $\mathcal{F}_{\rm ext}$ and $\mathcal{F}_{\rm agg}$.
    If $(m,m)$-\algname{ALG} returns a POF to the MOS problem
    then $(k,m)$-\algname{ALG} (using the {\color{magenta} magenta} text instead of the {\color{blue} blue} text in Alg.~\ref{alg:mos}) also returns a POF to the MOS problem.
\end{theorem*}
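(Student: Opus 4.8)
The plan is to establish the two defining properties of a correct POF computation for the aggregated problem: \emph{soundness} (every path returned in \textsc{Sols} has an aggregated cost that is non-dominated) and \emph{completeness} (every non-dominated aggregated cost is attained by some path in \textsc{Sols}). I would reduce both to a single structural fact that I use repeatedly: since $\mathcal{F}_{\rm agg}$ is monotonically non-decreasing, hidden-objective domination lifts to aggregated domination, i.e. $\mathbf{p} \preceq \mathbf{q}$ implies $\mathcal{F}_{\rm agg}(\mathbf{p}) \preceq \mathcal{F}_{\rm agg}(\mathbf{q})$. The three magenta modifications then become, respectively, a correct cost recursion (\texttt{compute\_cost} using $\mathcal{F}_{\rm ext}$), an admissible search order (\texttt{get\_best\_node} ordering by $\mathcal{F}_{\rm agg}(\mathbf{f}(n))$), and a sound solution-pruning rule (\texttt{is\_dominated\_sol} using $\mathcal{F}_{\rm agg}$), while \texttt{is\_dominated\_path} is intentionally left on the hidden objectives.

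For completeness, the key is that the asymmetric pruning never discards a needed solution. I would first argue that path domination on the hidden objectives is safe for the aggregated problem: writing $\pi_n$ for the path associated with node $n$, if a closed node $n$ hidden-dominates a popped node $n'$ at the same vertex ($\mathbf{g}(n) \preceq \mathbf{g}(n')$), then for every common suffix $\sigma$ the monotonicity of $\mathcal{F}_{\rm ext}$ gives $\cost_{\mathcal{F}_{\rm ext}}(\pi_n \cdot \sigma) \preceq \cost_{\mathcal{F}_{\rm ext}}(\pi_{n'} \cdot \sigma)$, and the lifting fact then yields $\cost_{\mathcal{F}_{\rm agg}}(\pi_n \cdot \sigma) \preceq \cost_{\mathcal{F}_{\rm agg}}(\pi_{n'} \cdot \sigma)$. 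Hence every aggregated-non-dominated solution reachable through $n'$ is matched by one through $n$; since $n$ lies in \textsc{Closed} its children already seed exactly those suffixes, so pruning $n'$ loses nothing. Solution domination on aggregated cost discards a popped solution only when an existing solution already dominates it in aggregated cost, so it never eliminates a non-dominated aggregated value, and equal-cost solutions always leave one representative behind, which is all the POF requires.

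For soundness, I would show that $\mathcal{F}_{\rm agg}(\mathbf{f}(n))$ is an admissible estimate in aggregated space: with $\mathbf{h}$ admissible on the hidden objectives, $\mathbf{f}(n)$ component-wise lower-bounds the hidden cost of every solution extending $n$, so by the lifting fact $\mathcal{F}_{\rm agg}(\mathbf{f}(n))$ lower-bounds each such solution's aggregated cost. Because \texttt{get\_best\_node} extracts the $\prec_{\rm lex}$-minimum of $\mathcal{F}_{\rm agg}(\mathbf{f}(n))$, and a solution node has $\mathbf{h}=\mathbf{0}$ so that its aggregated $f$-value coincides with its aggregated cost, every solution extracted later descends from a node whose aggregated $f$-value is lexicographically no smaller than that of the current extraction; hence its aggregated cost is lexicographically no smaller than the current solution's aggregated cost. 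Since a lexicographically strictly larger vector cannot dominate a lexicographically smaller one, no later solution can dominate one already inserted, while domination by an earlier solution is exactly what the aggregated \texttt{is\_dominated\_sol} test rejects. Together these give soundness.

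The main obstacle I anticipate is precisely justifying the \emph{asymmetry} between the two domination tests---finer, hidden-objective pruning on interior nodes versus coarser, aggregated pruning on solutions---and showing the two are mutually consistent: interior pruning must never remove the unique route to an aggregated-optimal solution (handled by the composed monotonicity of $\mathcal{F}_{\rm ext}$ and $\mathcal{F}_{\rm agg}$), and aggregated solution pruning must never remove a non-dominated aggregated value (handled by the semantics of the test together with the extraction order). The hypothesis that $(m,m)$-\algname{ALG} returns a POF is what lets me inherit, rather than re-prove, the base algorithm's termination and the admissibility of $\mathbf{h}$ on the hidden objectives; the genuinely new content is the two lifting arguments above, which is where I would concentrate the rigor.
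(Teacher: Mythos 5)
Your proof is correct and follows essentially the same route as the paper's: completeness via the unchanged hidden-objective path-domination test plus exhaustion of \textsc{Open}, and soundness via the lexicographic extraction order on $\mathcal{F}_{\rm agg}(\mathbf{f}(n))$ combined with the aggregated solution-domination test. If anything, your version is more careful than the paper's at two points the paper glosses over --- the explicit lifting lemma with the suffix argument showing that hidden-objective pruning cannot delete the only route to an aggregated-optimal solution, and the precise handling of lexicographic ties --- but these are refinements of the same argument, not a different one.
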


\begin{proof}
    First, note that the lexicographic key used to sort $\textsc{Open}$ has no effect on the completeness of the algorithm, as the search terminates only once $\textsc{Open}$ turns empty, and paths can only be discarded if they are dominated by a different solution or a different path. Hence, regardless of the order of nodes in $\textsc{Open}$ the algorithm $(k,m)$-\algname{ALG} will not terminate before exhausting all possible un-dominated paths.

    Second, the \texttt{is\_dominated\_path} is the same as in $(m,m)$-\algname{ALG}, so paths that are not discarded by $(m,m)$-\algname{ALG} are also not discarded by $(k,m)$-\algname{ALG}.

    Third, since $\mathcal{F}_{\rm ext}$ and $\mathcal{F}_{\rm agg}$ are monotonically non-decreasing their composition $\cost_{\mathcal{F}_{\rm agg}}():=\mathcal{F}_{\rm agg}(\cost_{\mathcal{F}_{\rm ext}}())$ is also monotonically non-decreasing.
    Thus, $\mathbf{g}$ is also monotonically non-decreasing, and given that $\mathbf{h}$ is admissible (as we assume throughout the paper), $\mathbf{f}$ is also monotonically non-decreasing.
    Hence, it holds that paths are encountered in a monotonically non-decreasing order. Namely, solutions that belong to the POF are encountered before non-optimal solutions that do not belong to the POF. Combining this with the fact that $(k,m)$-\algname{ALG} will not terminate before exhausting all possible un-dominated paths (proven above), we can determine that all optimal paths will be found and inserted into $\textsc{SOLS}$.

    Fourth, as \texttt{is\_dominated\_sol} utilizes $\cost_{\mathcal{F}_{\rm agg}}$ for its domination test, every non-optimal solution (where optimality is w.r.t. the solution objectives) will be discarded if an optimal solution was already found. Combining this with the fact that optimal solutions are guaranteed to be found before their corresponding non-optimal solutions (proven above), we can determine that $(k,m)$-\algname{ALG} returns a POF to the MOS problem defined by the solution objectives.
    Concluding, if $(m,m)$-\algname{ALG} returns a POF to a MOS problem
    then $(k,m)$-\algname{ALG} (using the {\color{magenta} magenta} text instead of the {\color{blue} blue} text in Alg.~\ref{alg:mos}) also returns a POF to its MOS problem.
\end{proof}

\end{document}